\newif\ifcomments
    \newcommand{\anna}[1]{{\color{teal} Anna: #1}}
    \newcommand{\dan}[1]{{\color{blue} Dan: #1}}
    \newcommand{\suraj}[1]{{\color{cyan} Suraj: #1}}
    \newcommand{\hima}[1]{{\color{red} Hima: #1}}
    \providecommand{\anna}[2][]{}
    \providecommand{\dan}[2][]{}
    \providecommand{\suraj}[2][]{}
    \providecommand{\hima}[2][]{}
\pgfplotsset{
    % define a `cycle list' for marker
    cycle list/.define={my marks}{
        every mark/.append style={dotted,fill=\pgfkeysvalueof{/pgfplots/mark list fill}},mark=none\\
        every mark/.append style={solid,fill=\pgfkeysvalueof{/pgfplots/mark list fill}},mark=square*\\
        every mark/.append style={solid,fill=\pgfkeysvalueof{/pgfplots/mark list fill}},mark=none\\
        every mark/.append style={solid,fill=\pgfkeysvalueof{/pgfplots/mark list fill}},mark=diamond*\\
    },
}
\title{On Minimizing the Impact of Dataset Shifts on Actionable Explanations}
\author[]{Anna~P.~Meyer\thanks{University of Wisconsin - Madison. Email: \texttt{apmeyer4@wisc.edu}}\hspace{0.35em}$^{\ddagger}$}
\author[]{Dan~Ley\thanks{Harvard University. Emails: \texttt{\{dley, ssrinivas\}@g.harvard.edu, hlakkaraju@hbs.edu}}\hspace{0.35em}\thanks{Equal contribution}\hspace{0.35em}}
\author[]{Suraj~Srinivas$^{\dag}$}
\author[]{Himabindu~Lakkaraju$^{\dag}$}
\affil[]{}
\begin{document}
\maketitle

\begin{abstract}
The Right to Explanation is an important regulatory principle that 
allows individuals to
request actionable explanations for algorithmic decisions.
However, several technical challenges arise when providing such actionable explanations in practice. 
For instance, models are periodically retrained to handle dataset shifts. 
This process may invalidate some of the previously prescribed explanations, thus rendering them unactionable. But, it is unclear if and when such invalidations occur, and what factors determine explanation stability i.e., if an explanation remains unchanged amidst model retraining due to dataset shifts. In this paper, we address the aforementioned gaps and
provide one of the first theoretical and empirical characterizations of the factors influencing explanation stability.
To this end, we conduct rigorous theoretical analysis to demonstrate that model curvature, weight decay parameters while training, and the magnitude of the dataset shift are key factors that determine the extent of explanation (in)stability. Extensive experimentation with real-world datasets not only validates our theoretical results, but also demonstrates that the aforementioned factors dramatically impact the stability of explanations produced by various state-of-the-art methods. 
%specific modeling choices that do not necessarily improve predictive accuracy have significant impact on the stability of model explanations.
%There has been an increased recent focus on regulation of Machine Learning (ML) models and their explanations, especially with principles like the Right to Explanation requiring that explanations be actionable. 
%In this paper, we consider how a routine aspect of ML model development, retraining under data shifts, affects model explanations.
%We show that standard training pipelines exhibit poor model and explanation stability under retraining due to data shifts.
%
%We address this problem by systematically characterizing explanation stability to data shifts. We identify, both theoretically and empirically, modelling choices such as the curvature of the non-linearity and weight decay that impact the degree of explanation stability. Our extensive evaluation compares the stability of various explanation methods and shows that changing the aforementioned factors can dramatically increase stability.
% 
%Overall, our work shows that explanation stability needs to be accounted for in the context of data shifts and that seemingly benign modelling choices can strongly improve explanation stability without affecting predictive accuracy. 
%Overall, our work demonstrates %that explanation stability depends on properties beyond predictive performance, and 
%that specific modeling choices that do not necessarily improve predictive accuracy have significant impact on the stability of model explanations.
\end{abstract}

\section{Introduction}
\label{sec:intro}

Machine learning (ML) models have recently witnessed increased utility in critical real-world applications. This, in turn, led to the introduction of several regulatory principles that aim to safeguard the practice of algorithmic decision making in such settings
~\cite{gdpr,aibillofrights}. The Right to Explanation is one such important regulatory principle that
allows individuals to request actionable explanations for algorithmic decisions that adversely impact them. Post-hoc explanation methods such as LIME, SHAP, input gradients, and Smoothgrad~\cite{shap,lime,Simonyan2013DeepIC,smoothgrad} have commonly been employed in practice to operationalize this principle. These methods explain complex model predictions by assigning importance scores to input features, typically via learning local linear approximations of the underlying functions~\cite{han2022which}.

Providing actionable explanations to end-users is often hindered by operational challenges in practice. For instance, models are periodically retrained to handle dataset shifts, and the original explanations may no longer be valid under the new model. If so, the original explanations do not remain actionable.
For example, a user may be informed that their low salary was the primary reason for the rejection of their loan application, thus prompting them to increase their income. 
However, an update to the model could result in a shift of model internals such that the user's credit score, and not their income, is now the primary factor for the rejection. In this case, the user's action of increasing their income would be less likely to result in a positive outcome once the updated model was deployed.

The aforementioned scenario could have been avoided if the underlying model and its explanations remained (relatively) unchanged after model retraining due to dataset shifts. Such \emph{explanation stability} implies actionability because if explanations remain (relatively) unchanged despite model retraining, they are likely to be actionable over an extended period of time. 
Therefore, it is necessary to develop a systematic understanding of explanation stability in the face of dataset shifts, characterize the conditions that lead to unstable explanations, and find ways to mitigate explanation instability after retraining. Despite its significance, there is little to no research on characterizing the factors that influence explanation stability. 

Our work addresses the aforementioned critical gaps by providing one of the first theoretical and empirical characterizations of the factors influencing explanation stability after dataset shifts. To this end, we conduct rigorous theoretical analysis to demonstrate that model curvature, weight decay parameters while training, and the magnitude
of the dataset shift are key factors that determine the extent of explanation (in)stability.
Our theoretical analysis emphasizes how seemingly inconsequential modeling decisions that may not impact predictive accuracy can heavily influence other critical aspects such as explanation stability. We also conduct extensive experimentation with real-world datasets to validate our theoretical insights with various state-of-the-art explanation methods (e.g., gradient and perturbation based methods such as LIME, SHAP, SmoothGrad etc.) and explanation stability metrics (e.g., $\ell_2$, top-k consistency).  
We also empirically analyze how other training decisions (e.g., learning rate and number of training epochs) impact explanation stability. 
In summary, our work makes the following key contributions:

\begin{enumerate}
    \item We provide one of the first theoretical and empirical characterizations of the factors influencing explanation stability in the face of dataset shifts.
    \item We conduct a rigorous theoretical analysis to demonstrate that model curvature, weight decay parameters during training, and the magnitude of the dataset shift are key factors influencing
the extent of explanation (in)stability (\S \ref{sec:theory}).
    \item We carry out extensive experimentation with multiple real-world datasets to validate our theoretical results (\S \ref{sec:finetune}). Our empirical findings suggest that standard neural network training pipelines exhibit low explanation stability, and confirm that our theoretical results are valid even when some of the underlying assumptions do not hold in practice (\S \ref{sec:extendingtheory}). 
    \item We empirically analyze the impact of other training hyperparameters such as learning rate, batch size, and number of training epochs on explanation stability (\S \ref{sec:sensitivity}).
\end{enumerate}

\section{Related Work}\label{sec:related}

\paragraph{Explanation Methods} 
A variety of post-hoc techniques have been proposed to explain complex models~\cite{DoshiVelez2017TowardsAR,koh2017understanding,ribeiro2018anchors}. 
These techniques differ in their access to the complex model (i.e., black box vs. access to internals), scope of approximation (e.g., global vs. local), search technique (e.g., perturbation-based vs. gradient-based), explanation families (e.g., linear vs. non-linear), etc. 
For instance, LIME~\cite{lime} and SHAP~\cite{shap} are \emph{perturbation-based}, \emph{local explanation} approaches that learn a linear model locally around each prediction. 
Other \emph{local explanation} methods capture feature importances by computing the gradient with respect to the input~\cite{selvaraju2017grad, Simonyan2013DeepIC, smoothgrad,sundararajan2017axiomatic}. 
Counterfactual explanation methods, on the other hand, capture the changes that need to be made to a given instance in order to flip its prediction~\cite{karimi2020algorithmic, karimi2020causal, looveren2019interpretable, face, ustun2019actionable,wachter2017counterfactual, mace}. 
In this work, we focus on analyzing the stability of explanations output by perturbation-based and gradient-based local explanation methods. 

\paragraph{Explanation Stability and Robustness}
Various notions of explanation stability and robustness have been suggested in the literature.\footnote{The literature is inconsistent on the distinction between explanation robustness and stability; for clarity we use ``robustness'' when the model is fixed (e.g., robustness to input perturbations) and ``stability'' when the model is changing.} One line of work shows that explanations are not robust by crafting adversarial manipulations of test inputs that change their explanations but not predictions~\cite{dombrowski2019explanations, ghorbani}. Another set of approaches studies explanation stability by altering the underlying model to maintain accuracy, but change explanations to a desired target~\cite{andersFairwashing, heo2019fooling, slack2020fooling}. Multiple works suggest using smooth or low-curvature models to improve both model and explanation stability~\cite{dombrowski2019explanations, srinivas2022efficient}.
However, all of these works focus on adversarial manipulations of either the model or the input, while we are interested in more realistic model shifts occurring due to natural dataset shifts. In contrast, other works have proposed algorithmic techniques to generate counterfactual explanations or algorithmic recourses that are stable under model shifts~\cite{bui2022counterfactual, dutta2022robust, forel2022robust, nguyen2022distributionally, nguyen2022robust, rawalRecourse, upadhyay2021}. However, these works differ from ours in three ways: first, we focus on feature attribution-based explanations that highlight influential features, while the existing work focuses on counterfactual explanations or algorithmic recourse. Second, we adapt the model to yield more stable explanations after retraining due to dataset shifts, whereas most of the existing work  aims to adapt the explanation method. And finally, we theoretically characterize the factors that impact explanation stability.

More generally, neural networks are known to be non-robust to small training modifications, e.g., different random initializations or alternate model selection due to underspecification~\cite{black-leaveoneout, damour2022underspecification, kolenBack, mehrerIndividual}.
However, relatively few works have studied how \emph{explanations} change in the same setting. These works only show empirically that explanations are not stable with respect to underspecification~\cite{brunet2022implications} or random model initializations~\cite{black2022selective}.  These works are orthogonal to ours, as we focus on theoretically and empirically characterizing specific properties of models and datasets that impact explanation stability.

\paragraph{Algorithmic Stability}
Algorithmic stability is a classical learning-theoretic framework that characterizes the consistency of outputs of learning algorithms when trained on different but similarly distributed data \cite{bousquet2002stability}. In contrast, our study does not make the assumption that the shifted data is similarly distributed. 
\cite{hardt2016train} studied the hypothesis stability of stochastic gradient descent (SGD) and showed that practical choices such as learning rate and number of epochs of training can impact the stability of model parameters. 
Recent work has applied the algorithmic stability perspective to post-hoc explanations~\cite{Fel_2022_WACV} by requiring that feature attributions be stable to a re-sampling of the underlying dataset. However, none of these prior works study realistic model shifts occurring due to natural dataset shifts or characterize the factors influencing explanation stability, which is the main focus of our work.

\newcommand{\X}{\mathbf{x}}
\newcommand{\R}{\mathbb{R}}
\newcommand{\E}{\mathbb{E}}
\newcommand{\D}{\mathcal{D}}
\newcommand{\grad}{\nabla_{\X}}
\newcommand{\p}{\|}

\newtheorem{definition}{Definition}
\newtheorem{lemma}{Lemma}
\newtheorem{theorem}{Theorem}
\newtheorem{assumptions}{Assumptions}
\Crefname{definition}{Defn.}{Defns.}

\newenvironment{hproof}{%
  \renewcommand{\proofname}{Proof Idea}\proof}{\endproof}

\section{Theoretical Analysis}\label{sec:theory}

In this section, we provide an analytical characterization of explanation shifts $e_1 \rightarrow e_2$ and model shifts $f_1 \rightarrow f_2$ that happen due to underlying dataset shifts $\mathcal{D}_1 \rightarrow \mathcal{D}_2$. Concretely, we address the following question: 
\noindent\fbox{%
% NOTE: for arxiv (single column) use 0.92 width. For UAI version (double column) use 0.46 width
    \parbox{0.96\textwidth}{%
How much do the explanations for a model $f_1$ trained on dataset $\D_1$ change when fine-tuning\footnotemark on a slightly shifted dataset $\D_2$, resulting in a new model $f_2$?
}}\footnotetext{
Here, fine-tuning refers to initializing model parameters of $f_2$ with those of $f_1$, and then training $f_2$ using $D_2$ until its loss converges. We provide practical details of this in \S \ref{sec:finetune}.}

Let the dataset $\D_1$ be composed of input-output pairs ($\X, y$), where $\X \in \R^d$, and $y$ is either a real value or a one-hot vector depending on the task (regression or classification). For simplicity of analysis, in this section we shall focus on gradient explanations $e = \grad f \in \R^d$, where $f$ is a scalar-valued neural network function $f(\cdot, \theta):\R^d \rightarrow \R$ mapping $D$-dimensional inputs to scalar outputs.
Let $\ell(\cdot, \theta): \R^d \rightarrow \R$ denote $f$ composed with a loss function such as cross-entropy loss. 
Here, $\theta$ represents the parameters of the function being minimized, e.g., a deep neural network.

Our goal here is to relate the explanation shift to the dataset shift (i.e., $d(\D_1, \D_2)$, for some distance measure $d$) and the learning algorithm. To this end, our overall strategy is: 
\begin{enumerate}
    \item In \S \ref{subsec:param_shift}, we bound the \emph{parameter shift} $\| \theta_2 - \theta_1 \|_2$ in terms of the \emph{dataset shift} $d(\D_1, \D_2)$.
    \item In \S \ref{subsec:expln_shift}, we bound the \emph{explanation shift} $\| \grad f(\X, \theta_2) - \grad f(\X, \theta_1) \|_2 $ in terms of the \emph{parameter shift} $\| \theta_2 - \theta_1 \|_2$ derived above.
\end{enumerate}

While the existence of these shifts are qualitatively evident, 
we make novel contributions in that (a) we quantify the parameter and gradient changes and (b) we show that these changes are affected by factors in the modelling process. 

\subsection{Bounding the Parameter Shift}\label{subsec:param_shift}

We begin by defining the distance between datasets $d(\D_1, \D_2)$ using Hungarian distance, as defined in \Cref{defn:hungarian}. The idea behind the Hungarian distance is simple: we find the "alignment" between data points of two datasets that minimizes the average $\ell_2$ distance between a point in $\D_1$ and its counterpart in $\D_2$. 

\begin{definition}\label{defn:hungarian}
Given datasets 
$\D_1=\{(x_i, y_i)_{i=1,\ldots,N}\}\ $ 
and 
$\D_2 = \{ (x'_i, y'_i)_{i=1,\ldots,N}\}$,  
their \emph{Hungarian distance} is
    \begin{align*}
       d(\D_1, \D_2) = \min_{P(\D_2)} \sum_{i=1}^N \| x_i - x'_i \|_2
    \end{align*}
where the minimum is taken over $P(\D_2)$, all permutations on the data points of $\D_2$.
\end{definition}

Using this distance metric, we derive an expression for the parameter change $\| \theta_2 - \theta_1 \|_2$ in terms of the dataset distance $d(\D_1, \D_2)$. We use the following assumptions to derive this result: (1) we minimize a regularized loss $\ell_{reg}$ that involves weight decay, i.e, $\ell_{reg}(\theta) = \ell(\theta) + \gamma \| \theta \|^2_2$; (2) $\ell$ is locally quadratic; (3) the learning algorithm returns a unique minimum $\theta$ given a dataset $\D$.

\begin{theorem} Given the assumptions stated above, and that $\mathcal{L}_x (\theta_1)$ is the Lipschitz constant of the model with parameters $\theta_1$, we have 

\begin{align*}
    \p \theta_2 - \theta_1 \p_2 & \leq  \sqrt{\frac{ \mathcal{L}_x(\theta_1) d(\D_1, \D_2)}{\gamma }} + C
\end{align*}

where $\gamma$ is the weight decay regularization constant, and $C$ is a small problem-dependent constant.

\end{theorem}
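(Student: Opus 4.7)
The plan is to exploit the $2\gamma$-strong convexity that weight decay forces on $\ell_{reg}$, then convert the resulting suboptimality gap into a dataset-level quantity bounded by the Hungarian distance via Lipschitz continuity of the model in its input.

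First I would observe that since $\ell$ is locally quadratic and $\ell_{reg}(\theta) = \ell(\theta) + \gamma\|\theta\|_2^2$, the regularized loss $\ell_{reg}^{(2)}$ on $\mathcal{D}_2$ is (at least locally) $2\gamma$-strongly convex around its unique minimizer $\theta_2$. Strong convexity gives the anchor inequality
\begin{align*}
\gamma\,\|\theta_1 - \theta_2\|_2^2 \;\leq\; \ell_{reg}^{(2)}(\theta_1) - \ell_{reg}^{(2)}(\theta_2),
\end{align*}
so the task reduces to bounding the right-hand side above. Second, I would route the suboptimality gap through $\ell_{reg}^{(1)}$ by writing
\begin{align*}
\ell_{reg}^{(2)}(\theta_1) - \ell_{reg}^{(2)}(\theta_2)
&= \bigl[\ell_{reg}^{(2)}(\theta_1) - \ell_{reg}^{(1)}(\theta_1)\bigr] \\
&\quad + \bigl[\ell_{reg}^{(1)}(\theta_1) - \ell_{reg}^{(1)}(\theta_2)\bigr] + \bigl[\ell_{reg}^{(1)}(\theta_2) - \ell_{reg}^{(2)}(\theta_2)\bigr].
\end{align*}
The middle bracket is non-positive because $\theta_1$ minimizes $\ell_{reg}^{(1)}$, and the $\gamma\|\theta\|_2^2$ pieces cancel inside each outer bracket since the regularizer does not depend on the data. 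Hence
\begin{align*}
\ell_{reg}^{(2)}(\theta_1) - \ell_{reg}^{(2)}(\theta_2) \;\leq\; \bigl[\ell^{(2)}(\theta_1) - \ell^{(1)}(\theta_1)\bigr] + \bigl[\ell^{(1)}(\theta_2) - \ell^{(2)}(\theta_2)\bigr].
\end{align*}

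Third, I would apply the optimal permutation realizing the Hungarian distance, so that each outer bracket becomes a sum of per-sample differences $\ell(f(x_i',\theta), y_i') - \ell(f(x_i,\theta), y_i)$ over matched pairs. Lipschitz continuity of the prediction loss in the input with constant $\mathcal{L}_x(\theta)$ then gives $\ell^{(2)}(\theta) - \ell^{(1)}(\theta) \leq \mathcal{L}_x(\theta)\,d(\mathcal{D}_1,\mathcal{D}_2)$, so combining with the anchor inequality yields
\begin{align*}
\gamma\,\|\theta_1-\theta_2\|_2^2 \;\leq\; \bigl(\mathcal{L}_x(\theta_1) + \mathcal{L}_x(\theta_2)\bigr)\,d(\mathcal{D}_1,\mathcal{D}_2).
\end{align*}
Writing $\mathcal{L}_x(\theta_2) = \mathcal{L}_x(\theta_1) + O(\|\theta_1-\theta_2\|_2)$ and taking square roots produces $\|\theta_2-\theta_1\|_2 \leq \sqrt{\mathcal{L}_x(\theta_1)\,d(\mathcal{D}_1,\mathcal{D}_2)/\gamma} + C$, with $C$ absorbing the numerical constant and the drift in $\mathcal{L}_x$.

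The main obstacle is honestly controlling $C$: strong convexity and the locally quadratic assumption are both local, $\mathcal{L}_x$ varies with $\theta$ (so applying it at $\theta_2$ requires a uniform estimate in a neighbourhood of $\theta_1$), and the Hungarian distance only tracks input displacements rather than any label changes induced by the shift. All of these error sources must be swept into $C$, which is why the bound is meaningful only in the small-shift regime where $\theta_2$ stays inside the quadratic basin around $\theta_1$.
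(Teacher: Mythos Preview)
Your argument is correct and shares the paper's backbone (the $2\gamma$ lower bound on the Hessian of $\ell_{reg}$ plus a Lipschitz bound on the loss shift), but you decompose the suboptimality gap differently. The paper writes
\[
\ell_{reg}^{(2)}(\theta_1) - \ell_{reg}^{(2)}(\theta_2) = \bigl[\ell_{reg}^{(2)}(\theta_1) - \ell_{reg}^{(1)}(\theta_1)\bigr] + \bigl[\ell_{reg}^{(1)}(\theta_1) - \ell_{reg}^{(2)}(\theta_2)\bigr],
\]
bounds only the first bracket via $\mathcal{L}_x(\theta_1)\,d(\D_1,\D_2)$, and \emph{defines} $C = \sqrt{(\ell_{\D_1}(\theta_1) - \ell_{\D_2}(\theta_2))/\gamma}$ from the second bracket directly. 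This buys them a $C$ that depends only on the two optimal loss values and is genuinely independent of $d(\D_1,\D_2)$, and they never need $\mathcal{L}_x(\theta_2)$. Your three-term telescoping instead drops the nonpositive middle term and Lipschitz-bounds \emph{both} outer brackets, which is arguably tighter in spirit (you avoid the possibly uncontrolled gap $\ell_{\D_1}(\theta_1) - \ell_{\D_2}(\theta_2)$) but forces $\mathcal{L}_x(\theta_2)$ into the estimate; absorbing it then requires the circular step $\mathcal{L}_x(\theta_2) = \mathcal{L}_x(\theta_1) + O(\|\theta_1-\theta_2\|)$, and the residual you sweep into $C$ is no longer a constant but scales like $(\sqrt{2}-1)\sqrt{\mathcal{L}_x(\theta_1)\,d/\gamma}$ plus lower order. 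Both routes deliver the intended qualitative message; the paper's is cleaner for matching the exact stated form, while yours would be the natural choice if one wanted a bound with $C=0$ under a uniform Lipschitz assumption over the interpolation segment.
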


\begin{hproof}
The idea of the proof is to first estimate $\ell_{\D_2}(\theta_1)$, i.e., the initial loss of the model $\theta_1$, before fine-tuning on $\D_2$. Assuming, for the sake of explanation, that the optimal loss value obtained by $\theta_2$ on $\D_2$ is zero, we have obtained the change in loss value from $\theta_1 \rightarrow \theta_2$. To derive the change in parameters from the change in loss derived above, we use a second order Taylor series expansion (with the assumption that the loss is locally quadratic). In particular, we lower bound the Hessian with its lowest eigenvalue using the weight decay term $\gamma$.
The ``problem-dependent constant'' $C$ arises from deviating from the assumption that the optimal loss value is zero. The complete proof is provided in the supplementary material.
\end{hproof}

The above theorem achieves our first goal of relating parameter change to dataset shift. Intuitively, this tells us that parameter shift depends directly on dataset shift (as expected), directly on the Lipschitz constant (which depends on the model's robustness), and inversely on the weight decay parameter.

\subsection{Bounding the Explanation Shift}\label{subsec:expln_shift}
 
We now consider the problem of bounding the explanation shift $\| \grad f(\cdot, \theta_2) - \grad f(\cdot, \theta_1) \|_2$ given an estimate of the parameter shift $\| \theta_2 - \theta_1 \|_2$. To this end, we first define a quantity called the \textit{gradient-parameter Lipschitz constant}, defined in \Cref{defn:pilp}. To the best of our knowledge, this quantity has not been considered in previous works. 

\begin{definition}\label{defn:pilp} We define the ``gradient-parameter Lipschitz constant'' $\mathcal{L}_{\Theta, \D}$ w.r.t. an input distribution $\D$, and parameter set $\Theta$ as follows:

\begin{align*}
    \mathcal{L}_{\Theta, \D} = \E_{\theta \in \Theta} \E_{\X \sim \D} \p \nabla_\theta \nabla_x f(x; \theta) \p_2
\end{align*}

\end{definition}

Intuitively, this quantity captures the sensitivity of gradient explanations to small changes in the parameter $\theta$. In contrast to usual definitions of Lipschitz constants, this is defined (1) locally with respect a particular distribution and parameter set, and (2) using the mean instead of the $\sup$. This allows us to derive the following relationship between average gradient difference and parameter shift.

\begin{lemma}
The gradient-parameter Lipschitz has the following property: 

\begin{align*}
    \E_{\X \sim \D} \p \nabla_x f(x; \theta_1) - \nabla_x f(x; \theta_2) \p_2 \leq~ \mathcal{L}_{\Theta, \D} \times  \p \theta_2 - \theta_1 \p_2
\end{align*}

where $\Theta = \{ \lambda \theta_1 + (1 - \lambda) \theta_2 \mid \lambda \in [0,1] \}$.

\end{lemma}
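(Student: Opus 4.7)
The plan is to exploit the fact that the segment $\Theta = \{\lambda\theta_1 + (1-\lambda)\theta_2 : \lambda \in [0,1]\}$ connecting $\theta_1$ and $\theta_2$ is convex, so we can apply the fundamental theorem of calculus along it. Concretely, for each fixed $x$, I would write
\begin{equation*}
\nabla_x f(x;\theta_2) - \nabla_x f(x;\theta_1) \;=\; \int_0^1 \frac{d}{d\lambda}\,\nabla_x f(x;\theta_\lambda)\,d\lambda
\;=\; \int_0^1 \bigl[\nabla_\theta \nabla_x f(x;\theta_\lambda)\bigr](\theta_2-\theta_1)\,d\lambda,
\end{equation*}
where $\theta_\lambda := \lambda\theta_2 + (1-\lambda)\theta_1$ and $\nabla_\theta \nabla_x f$ is interpreted as the Jacobian of the (vector-valued) input gradient with respect to the parameters, acting on the parameter-shift vector. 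This requires only that $f$ is twice differentiable in $(x,\theta)$, which is implicitly assumed by the gradient-parameter Lipschitz definition.

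Next I would take $\ell_2$ norms, pull the norm inside the integral via the triangle inequality (Jensen's inequality for the integral), and bound the integrand using the sub-multiplicativity of the operator norm:
\begin{equation*}
\bigl\|\nabla_x f(x;\theta_2) - \nabla_x f(x;\theta_1)\bigr\|_2 \;\leq\; \int_0^1 \bigl\|\nabla_\theta \nabla_x f(x;\theta_\lambda)\bigr\|_2 \cdot \bigl\|\theta_2-\theta_1\bigr\|_2\,d\lambda.
\end{equation*}
Since $\|\theta_2-\theta_1\|_2$ is independent of $\lambda$, it factors out of the integral.

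Finally, I would take the expectation over $\X \sim \D$ on both sides and swap the expectation with the $\lambda$-integral using Fubini (legitimate under mild integrability, which is implicit in the definition of $\mathcal{L}_{\Theta,\D}$). The key observation is that integrating uniformly over $\lambda \in [0,1]$ is precisely the uniform distribution over the segment $\Theta$, so
\begin{equation*}
\int_0^1 \E_{\X \sim \D}\bigl\|\nabla_\theta \nabla_x f(x;\theta_\lambda)\bigr\|_2\,d\lambda \;=\; \E_{\theta \in \Theta}\E_{\X \sim \D}\bigl\|\nabla_\theta \nabla_x f(x;\theta)\bigr\|_2 \;=\; \mathcal{L}_{\Theta,\D},
\end{equation*}
yielding the claimed bound.

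The only real subtlety is the notational one of treating $\nabla_\theta \nabla_x f(x;\theta)$ as a linear operator from parameter space to input-gradient space and verifying that its $\ell_2 \to \ell_2$ operator norm is what is meant by $\|\cdot\|_2$ in \Cref{defn:pilp} — once that convention is fixed, each step is a textbook application of the mean value theorem, Jensen's inequality, and Fubini. No step is genuinely hard; the main thing to get right is ensuring that the $\E_{\theta \in \Theta}$ in the definition is understood as the uniform measure on the segment so that the change-of-variables from $\lambda$-integration to $\theta$-expectation is exact.
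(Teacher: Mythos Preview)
Your proposal is correct and follows essentially the same route as the paper: fundamental theorem of calculus along the segment, pull the norm inside the integral, factor out $\|\theta_2-\theta_1\|_2$, take expectation over $\X$ and swap with the $\lambda$-integral, then identify the result with $\mathcal{L}_{\Theta,\D}$. The only cosmetic difference is that the paper labels the norm bound as ``Cauchy--Schwarz'' whereas you (more accurately for the vector-valued case) call it sub-multiplicativity of the operator norm.
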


This relation follows immediately from the fundamental theorem of integral calculus, and the full proof is given in the supplementary material. This result reveals an intuitive fact that larger parameter shifts lead to larger gradient shifts, but mediated by the gradient-parameter Lipschitz. This quantity is difficult to analyse for general neural networks, and we provide below an analysis for the case of 1-hidden layer neural networks, with a specific simplifying assumption on the data distribution.

\begin{theorem}
Assume that we have a 1-hidden layer neural network with weights $\theta$, and random inputs $\X \sim \mathcal{N}(0, I)$\footnote{Covariance of I is chosen for notational brevity}. Further assume that we use an activation function $\sigma$ with well-defined second derivatives (e.g: softplus). For this case, the gradient-parameter Lipschitz constant is

\begin{align*}
   \E_{\X, \theta} \| \nabla_{\theta} \grad f(\X, \theta) \|_2 \leq \left( \E_{\theta} \| \theta \|_2 \right) + \beta \left( \E_{\theta} \phi(\theta) \right)
\end{align*}

where $\beta$ is the maximum curvature of activation function $\sigma$, and $\phi(\theta)$ is the path-norm \cite{neyshabur2015norm} of the model.
\end{theorem}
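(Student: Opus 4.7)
The plan is to write out the 1-hidden-layer network explicitly as $f(\X, \theta) = \sum_{k=1}^m v_k\, \sigma(w_k^{\top}\X)$, with $\theta$ consisting of the output weights $v=(v_1,\ldots,v_m)$ and the input-layer weight matrix $W$ whose rows are the $w_k$. Differentiating in $\X$ gives the closed form
\begin{equation*}
\nabla_{\X} f(\X,\theta) \;=\; \sum_{k=1}^m v_k\, \sigma'(w_k^{\top}\X)\, w_k,
\end{equation*}
so $\nabla_{\theta}\nabla_{\X} f$ is a $d\times|\theta|$ Jacobian that splits naturally into a $v$-block and a $W$-block. I would bound the spectral norm of the whole Jacobian by the sum of the spectral norms of the two blocks (subadditivity under horizontal stacking), take expectations in $\X$ and $\theta$ separately, and match the two terms in the stated bound.

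For the $v$-block, $\nabla_v\nabla_{\X} f$ is the $d\times m$ matrix whose $k$-th column is $\sigma'(w_k^{\top}\X)\,w_k$. I would bound its spectral norm by its Frobenius norm, obtaining $\bigl(\sum_k \sigma'(w_k^{\top}\X)^{2}\|w_k\|_2^{2}\bigr)^{1/2}$, and then use the standard fact that $|\sigma'|\le 1$ for activations such as softplus to upper-bound this by $\|W\|_F \le \|\theta\|_2$. Taking expectation over $\theta$ yields the first term $\E_{\theta}\|\theta\|_2$ in the theorem, and note this bound is $\X$-independent so the expectation over $\X$ is trivial.

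For the $W$-block, a direct computation using the chain rule gives
\begin{equation*}
\nabla_{w_k}\nabla_{\X} f \;=\; v_k\,\sigma'(w_k^{\top}\X)\, I \;+\; v_k\,\sigma''(w_k^{\top}\X)\, w_k \X^{\top}.
\end{equation*}
I would apply the triangle inequality on each $d\times d$ slab, using $|\sigma''|\le \beta$ to bound the rank-one piece by $|v_k|\,\beta\,\|w_k\|_2\,\|\X\|_2$, and then sum across $k$. The resulting $\sum_k |v_k|\,\|w_k\|_2$ is exactly the (mixed $\ell_1$–$\ell_2$) path-norm $\phi(\theta)$ from \cite{neyshabur2015norm}, so after integrating the $\|\X\|_2$ factor against the Gaussian input (whose first absolute moment is an $O(1)$ quantity we absorb into $\beta$ and the normalization of $\phi$), we obtain a contribution of the form $\beta\,\E_{\theta}\phi(\theta)$.

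The main obstacle I anticipate is tracking dimensional constants cleanly: the $\sigma'(w_k^{\top}\X)\,I$ contribution in the $W$-block and the $\E\|\X\|_2$ factor from the $\sigma''$ piece both naively introduce $\sqrt{d}$-type constants, so the bound only takes the clean form stated once these are absorbed into the definition of the path-norm $\phi$ (which is a weight-space object whose standard normalization already contains the appropriate dimensional scaling) and into the curvature constant $\beta$. Finally, combining the $v$-block and $W$-block bounds by subadditivity, and using linearity of expectation to separate the $\X$- and $\theta$-averages, yields $\E_{\X,\theta}\|\nabla_{\theta}\nabla_{\X} f\|_2 \le \E_{\theta}\|\theta\|_2 + \beta\,\E_{\theta}\phi(\theta)$ as claimed.
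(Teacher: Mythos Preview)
Your overall setup and the $v$-block computation match the paper's argument, but there are two places where your route diverges from the paper's and leads to the ``dimensional constants'' you flag, while the paper's approach avoids them.

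First, the paper works with the \emph{Frobenius} norm of the entire Jacobian $\nabla_\theta\nabla_{\X} f$ from the outset (which upper-bounds the spectral norm), and takes the Gaussian expectation componentwise \emph{inside} the squared Frobenius sum. Concretely, for the entry $\partial^2 f / (\partial \X_i\,\partial W_1^{m,k})$ with $m\neq i$, the square is $(W_2^k)^2\,\sigma''(\cdot)^2\,(W_1^{k,i})^2\,\X_m^2$, and $\E_{\X}[\X_m^2]=1$ removes the input dependence with no $\sqrt{d}$ appearing. Your approach instead bounds the rank-one piece in operator norm by $|v_k|\,\beta\,\|w_k\|_2\,\|\X\|_2$ and then integrates $\E\|\X\|_2\approx\sqrt d$; that factor is genuinely extraneous and cannot be absorbed into $\phi$ or $\beta$ as you suggest.

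Second, the diagonal term $v_k\,\sigma'(w_k^{\top}\X)\,I$ is what the paper handles most differently. In operator norm this block contributes $|v_k|$, and summing across $k$ gives $\sum_k|v_k|$, which is neither $\|\theta\|_2$ nor $\phi(\theta)$. In the paper's Frobenius calculation, however, this term contributes $\sum_k (W_2^k)^2$ (one copy per diagonal index), which combines with the $v$-block contribution $\sum_{i,k}(W_1^{k,i})^2$ to give exactly $\|\theta\|_2^2$. This is how the first term of the bound arises cleanly; it is not a matter of hiding constants.

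Finally, the path-norm in the statement (and in the paper's derivation) is the $\ell_2$ path-norm $\phi(\theta)=\bigl(\sum_{i,k}(W_2^k\,W_1^{k,i})^2\bigr)^{1/2}$, which is what falls out of the Frobenius computation of the $\sigma''$ terms. Your $\sum_k|v_k|\,\|w_k\|_2$ is the $\ell_1$--$\ell_2$ variant, a different (and larger) quantity, so even if your constants worked out you would be proving a weaker inequality than the one stated. Switching to the Frobenius norm and taking $\E_{\X}$ inside the sum of squares fixes all three issues simultaneously.
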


\begin{hproof}
The proof involves computing the gradient-parameter derivatives for the 1-hidden layer neural network case. Our assumption regarding smoothness of activation function ensures that $\sigma'' \leq \beta$, i.e., that the second derivative of the activation function is upper bounded, which is true for smooth non-linearities like softplus. Another simplifying assumption regarding the distribution of the inputs ($\X \sim \mathcal{N}(0, I)$) helps us dramatically simplify the expression for the expected value of the gradient norm terms. 

The path-norm for a 1-hidden layer neural network (for weights $W_2, W_1$) is given by $\phi(\theta) = \sqrt{\sum_{i,j} (W_2^j W_1^{i,j})^2} $ and has been linked to model generalization \cite{neyshabur2015norm}. The complete proof is provided in the supplementary material.
\end{hproof}

These results achieve our goal of relating the change in gradients to parameter change. Taken together with the results in \S \ref{subsec:param_shift}, this achieves our overall goal of relating gradient change to dataset shift. Intuitively, this tells us the following regarding gradient shift:
\begin{enumerate}
    \item It depends directly on the \emph{dataset distance} $d(\D_2, \D_1)$, which is as expected: the larger the dataset shift, the larger can be the parameter shift, and explanation shift.
    \item It depends inversely on the \emph{weight decay} parameter $\gamma$, and directly on the norm of weights $\| \theta \|_2$. Intuitively, large weight decay shifts the optima to be closer to zero, reducing the norm of weights, which makes all optima closer to each other. 
    \item It depends directly on the \emph{Lipschitz constant} of the first model, which is connected to the model's robustness. Intuitively, if the model is already robust to small input changes, we don't expect it to shift too much upon fine-tuning on slightly shifted data.
    \item It depends directly on the \emph{smoothness constant} $\beta$ of the activation function used in the model. The smoother the model, the smaller are its second derivatives, and thus its gradient-input Lipschitz.  
\end{enumerate}

These insights motivate the core thesis of the paper, i.e., that explanation stability depends on properties beyond predictive performance. Specifically, we see that specific algorithm choices that do not necessarily improve accuracy, such as weight decay, smoothness constant and robustness-inducing losses, affect explanation and model stability to data shifts.

We conclude by making two remarks regarding the theory. First, it only highlights \emph{sufficient conditions} for explanation stability, not necessary conditions. Meaning, there can be ways to achieve gradient stability without the parameters $\theta_1, \theta_2$ being close, or without neural nets being Lipschitz, or when ReLU or weight decay are not used, but these lie outside the scope of this theory. Second, the focus of these bounds is not to produce numerically tight estimates for explanation stability, rather, it is intended as a tool to make \textit{qualitative predictions} regarding specific modelling interventions that can improve explanation stability in practice. \Cref{sec:eval} explores this in more detail.

%\subsection{Practical Implications}

%\paragraph{Fine-tuning vs Training from Scratch} Our theory makes some simplifying assumptions: first that the loss is locally quadratic, and second, that the minimum for the optimization problem is unique. These assumptions are clearly incompatible with non-convex models like neural networks, whose loss surfaces are non-quadratic by definition and thus may have non-unique solutions due to the existence of multiple local and global minima. 

%Our problem setting of analyzing \textit{fine-tuning} rather than training from scratch is designed to precisely accommodate these assumptions. Even for non-convex models like neural networks, we can expect a locally quadratic structure for a small local neighborhood around the optimal parameters. The analysis for the case of \textit{training-from-scratch} is tricky precisely due to the tendency of neural networks to converge to different local minima for each run. One way to overcome this is to always initialize at the same location, by fixing the random seed. We investigate this in more detail in the experiments.

%Our theory also shows that gradient dissimilarity depends on several model and algorithm specific components, such as weight decay, softplus, and the extent of model robustness. These relationships motivate the core thesis of the paper - that explanation stability depends on properties beyond training loss, and \emph{specific algorithmic choices matter}.

\section{Experimental Evaluation}\label{sec:eval}

In this section, we describe our general evaluation setup. In \S \ref{sec:finetune}, we experimentally validate that the modeling interventions identified in \S \ref{sec:theory} improve explanation stability. Next, in \S \ref{sec:extendingtheory}, we show that these relationships extend to realistic training setups and more-sophisticated explanation techniques (i.e., top-K feature attributions of techniques like SmoothGrad, LIME, and SHAP). Finally, in \S \ref{sec:sensitivity}, we show that other hyperparameters not discussed in our theory -- e.g., learning rate, learning rate decay, batch size, and number of training epochs -- also impact gradient stability. 

\paragraph{Datasets} We evaluate explanation stability on three binary datasets with continuous features. We use the WHO life expectancy dataset ($n$=2928, number of features $d$=18)~\cite{whoDataset}, which evaluates whether a country's life expectancy is above the global median based on health and economic factors; the HELOC dataset ($n$=9871, $d$=23)~\cite{heloc}, which evaluates loan acceptance given the applicant's financial information; and the Adult Income dataset ($n$=32561, $d$=6)~\cite{uci-data}, which evaluates whether a person's income is above \$50,000. We model dataset shifts in two ways:

\begin{itemize}
    \item \textit{Synthetic Noise.} We add Gaussian $\mathcal{N}(0,\sigma^2)$ noise to all training samples, with varying levels of noise $\sigma$.
    \item \textit{Temporal Shift.} We create a temporal shift dataset from the WHO data by using pre-2012 data as the ``original'' dataset and the full dataset as the ``shifted'' dataset.
\end{itemize}

\paragraph{Models} We evaluate the stability of medium-sized neural networks containing 5 layers of 50 nodes. By default, we use ReLU activation; however, to see the effects of model curvature on explanation stability, we use softplus (SP) for some experiments. For the experiments using SP, we vary the parameter $\beta$ from 2 to 10 depending on the dataset and the experiment's goals. Smaller values of $\beta$ correspond to lower-curvature models, while SP with $\beta=\infty$ is equivalent to ReLU. Unless otherwise specified, all models we compare within each section achieve similar predictive accuracy. 

\paragraph{Experimental procedure} For each experiment, we average over 10 trials to reduce the impact that random seeds  have on the outcomes. Each trial in the synthetic noise experiments consists of training one ``base'' model plus 10 ``noisy'' models, each corresponding to a different random perturbation of the data. In the temporal shift experiments, each trial compares the base model (trained on the partial dataset) to a new model (trained on the full dataset). We compute explanation stability for all test samples, except for LIME and SHAP results, where we limit computations to 100 randomly-chosen test samples. In the results, we display the mean of all trials along with the middle 50\% of values.

\subsection{Validating Our Theoretical Results}\label{sec:finetune}
\paragraph{Experiment setup}
In this section, we test our theoretical results from \S \ref{sec:theory} by evaluating the impact of \emph{weight decay}, \emph{model curvature}, and \emph{size of the data shift} on explanation stability. To ensure that our analysis is sound, we adhere to the theoretical assumptions as closely as possible. In particular, we focus on \emph{model parameters} ($\theta$) and \emph{gradients} ($g$) rather than top-K feature attributions, since top-K explanations are not differentiable. We train each base model for a large number of epochs to ensure convergence of the training loss to a minimum (i.e., we ignore overfitting on the test set). To fine-tune, we copy the parameters of the base model and then train on the shifted data. Also, we start fine-tuning with a lower learning rate and train for fewer epochs, given that the initial loss is already close to a minimum.

\begin{figure}[t] % 1 
    \centering
    \includegraphics[width=0.48\textwidth]{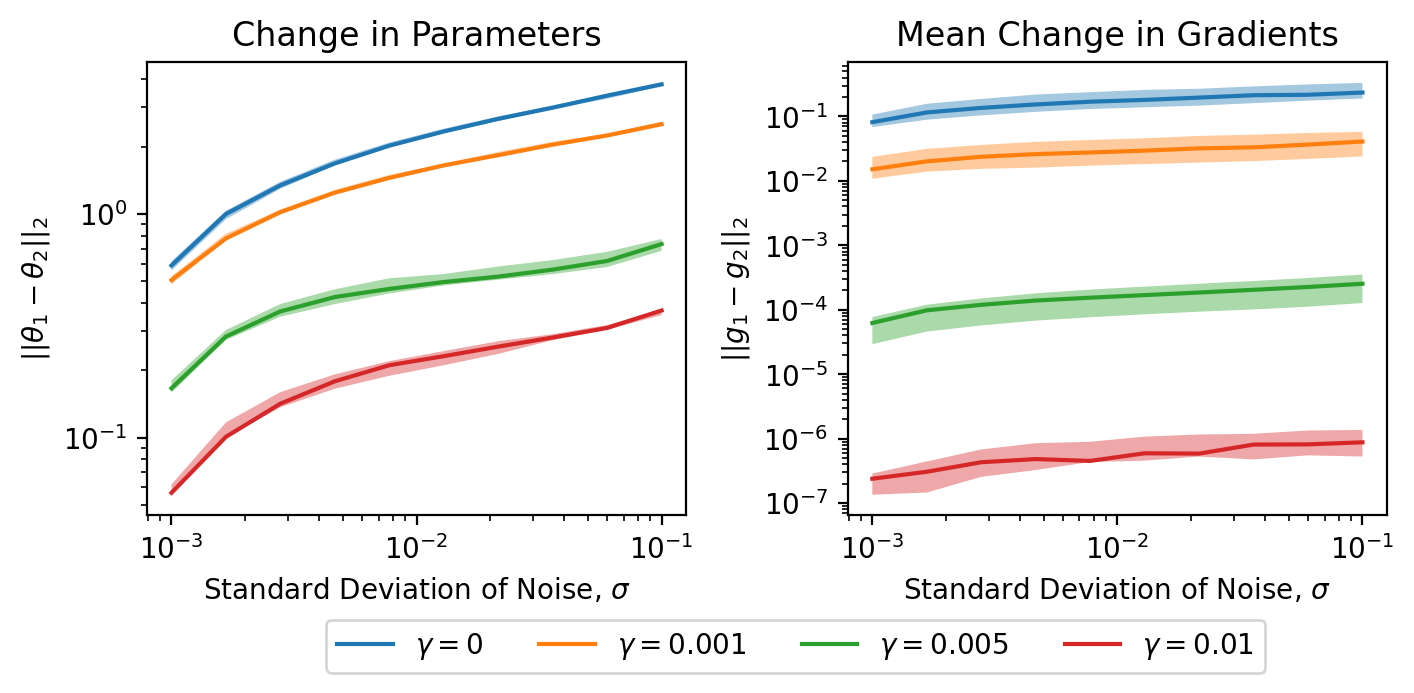}
    \caption{\small Effect of weight decay value on the mean gradient and parameter changes for HELOC after fine-tuning on a shifted dataset. All models use ReLU activation. The x-axis is the size of the data shift, represented by the standard deviation of noise, $\sigma$.}
    \label{fig:fine_tune_gamma_heloc}
\end{figure}

\begin{table}[t] % 1
    \small
    \centering
    \caption{\small Effect of weight decay ($\gamma$) on WHO (with fine-tuning).}
    \begin{tabular}{l|rrr}\toprule
   & $\gamma=0$ & %$0.0001$  & 
    $0.001$ & $0.01$ \\\midrule
    $\|\theta_1-\theta_2\|_2$ & 1.69\textpm 0.09
         %  & 0.524 % \textpm 0.027  
            &  1.53\textpm 0.03 
            &  0.95\textpm 0.02
            \\
    $\|g_1-g_2\|_2$  & 0.201\textpm 0.065
    %    & 0.102 % \textpm 0.046
        &0.122\textpm 0.034 
        & 0.010\textpm 0.002
        \\\bottomrule
    \end{tabular}
    \label{tab:wd_finetune}
\end{table}

For brevity, we omit similar results from the Adult dataset, with complete results in the appendix. We also provide the training hyperparameters for each dataset in the appendix.

\paragraph{Weight decay results} \Cref{fig:fine_tune_gamma_heloc} and \Cref{tab:wd_finetune} show the effect of weight decay for the synthetically and naturally-shifted datasets, respectively.  
We see that larger weight decay values correspond to orders of magnitude smaller mean gradient and parameter changes, which is in line with the theoretical relationship between weight decay and stability. 
As the amount of synthetic noise grows, we observe that both parameters and gradients diverge more between the base model and fine-tuned model. This pattern supports our hypotheses that (a) the loss landscape changes more with larger data shift and (b) a larger change in model parameters is correlated with a larger change in test sample gradients.

\paragraph{Curvature results} \Cref{fig:ft_curvature_heloc} and \Cref{tab:curv_finetune} show that using low-curvature training techniques increases gradient stability.  In particular, SP does better than ReLU, particularly with smaller values of $\beta$. For both the real-world and synthetic shifts, the gradient changes are less distinct (i.e., the confidence intervals overlap more) than the parameter changes, but still show a trend in the predicted direction. 

\begin{figure}[t] % 2
    \centering
    \includegraphics[width=0.48\textwidth]{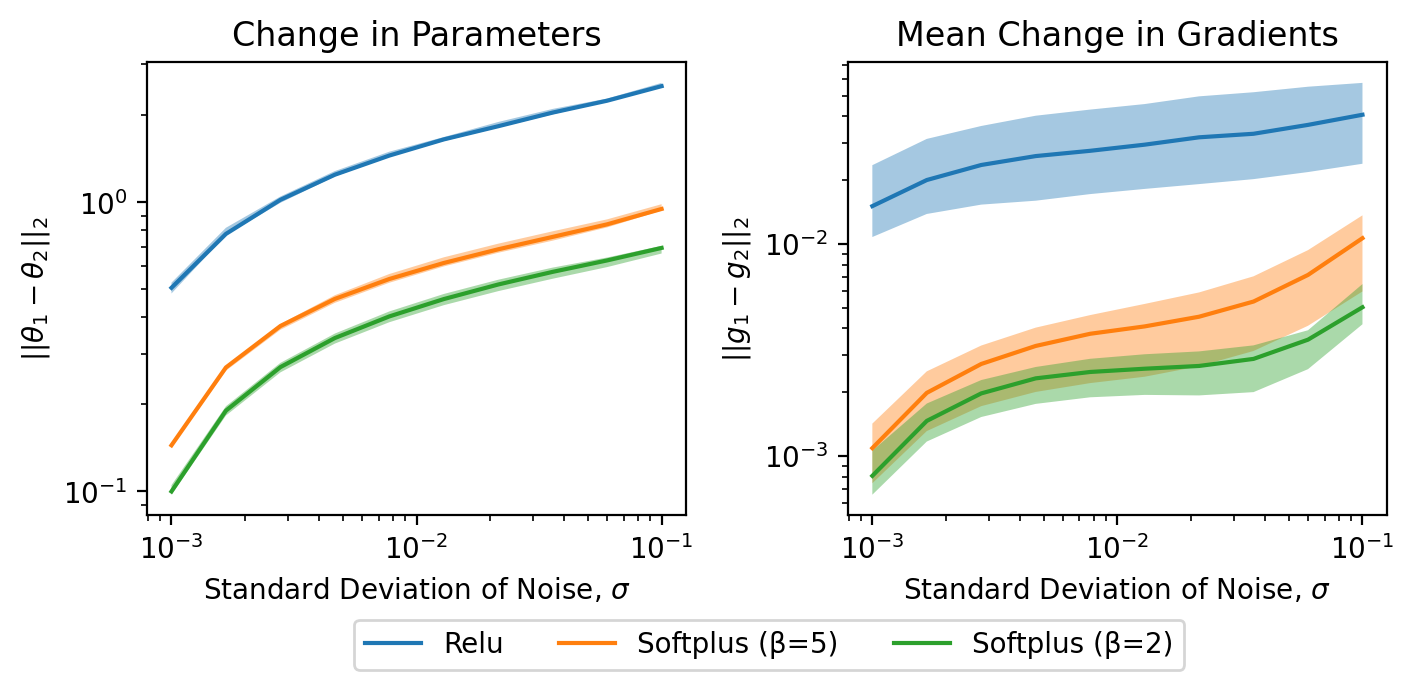}
    \caption{\small  Effect of model curvature on the mean gradient and parameter changes for the HELOC dataset after \textbf{fine-tuning} on a shifted dataset. The x-axis is the size of the data shift, represented by the standard deviation of noise, $\sigma$.}%of Gaussian $\mathcal{N}(0,\sigma^2)$ noise. }
    \label{fig:ft_curvature_heloc}
\end{figure}

\begin{table}[t] % 2
    \small
    \centering
    \caption{\small Effect of curvature on WHO (with \textbf{fine-tuning}).}
    \begin{tabular}{l|rrr}\toprule
   & ReLU  & SP ($\beta=10$) & SP ($\beta=5$) \\\midrule
    $\|\theta_1-\theta_2\|_2$ & 1.53\textpm 0.03
            & 1.37\textpm 0.05
            &  1.23\textpm  0.02
            \\
    $\|g_1-g_2\|_2$&0.122\textpm 0.034
            & 0.096\textpm 0.026
            & 0.070\textpm 0.024
            \\\bottomrule
    \end{tabular}
    \label{tab:curv_finetune}
\end{table}

\subsection{Going Beyond Theoretical Results}\label{sec:extendingtheory}

In this section, we evaluate whether our theoretical predictions still hold if we deviate from the theoretical assumptions, namely: fine-tuning, exactly reaching a local minimum, and using non-differentiable top-K feature attribution metrics rather than gradients $\ell_2$ distance. 
The top-K features for a model's predictions are the $k$ features with attribution values (e.g., gradients or functions of gradients for gradient-based methods; weights of linear models for LIME and SHAP) of highest magnitude. For example, if an instance $\mathbf{x}$ comprises of four features indexed by $[1 \cdots 4]$, and the gradient (feature attribution) of a given model w.r.t. $\mathbf{x}$ is $[-0.1, -0.4, 0.2, 0.3]$, then the top-2 features are features 2 and 4 since their corresponding feature attribution magnitudes -0.4 and 0.3 are the highest.

\paragraph{Experiment setup}
 We maintain our experimental setup from \S \ref{sec:finetune}, except that we train each model for a smaller number of epochs (until accuracy stabilizes), before retraining a model from scratch on the shifted data, rather than fine-tuning (hyperparameters are located in the appendix). We use the same random network initialization for both training and retraining, as prior work has shown that feature attributions are not stable to changes in random seed~\cite{black2022selective}.

\begin{figure}[t] 
    \centering
    \includegraphics[width=0.494\textwidth]{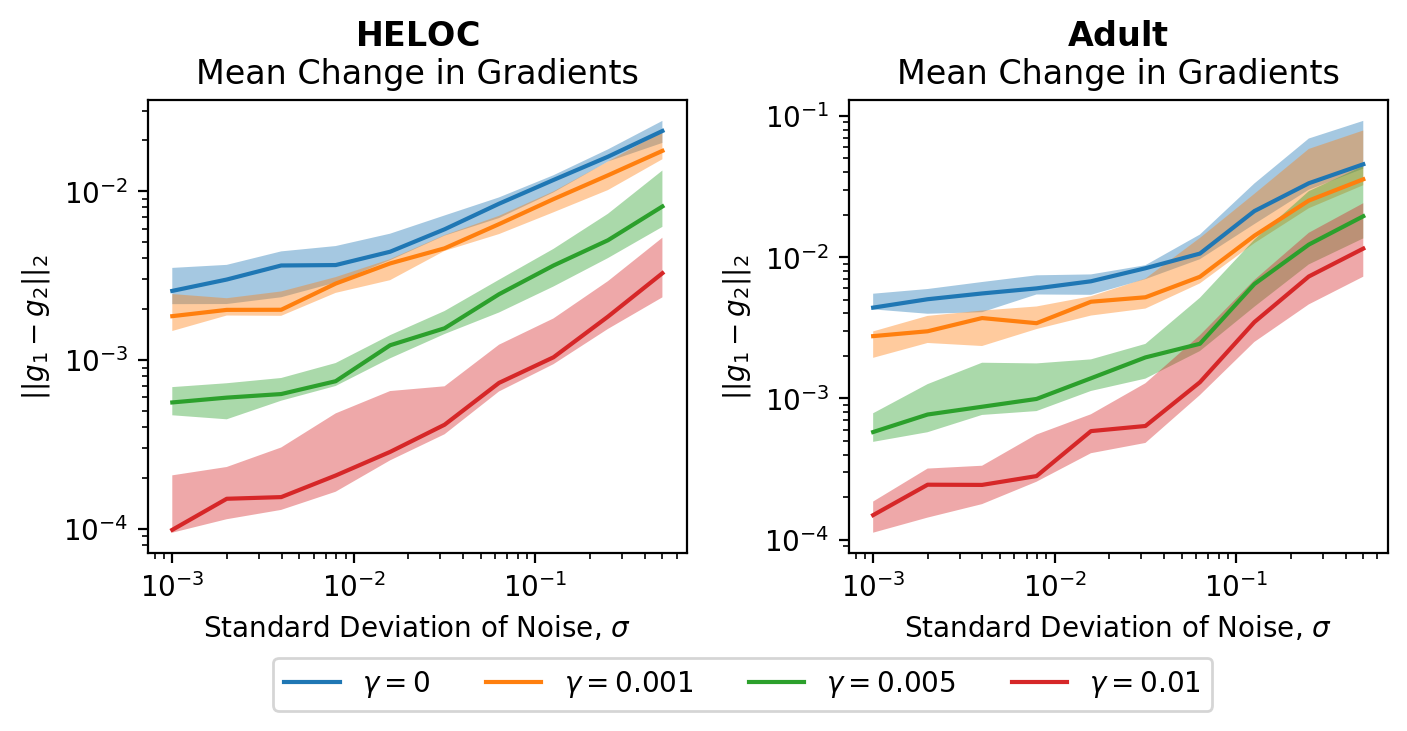}
    \caption{\small Effect of weight decay on parameter stability when \textbf{retraining} on HELOC and Adult datasets. The x-axis is the size of the data shift, represented by the standard deviation of noise, $\sigma$.}
    \label{fig:retraining_gamma_grads}
\end{figure}

\begin{table}[t] % 3
    \small
    \centering
    \caption{\small Effect of weight decay ($\gamma$) on WHO (with \textbf{retraining}).}
    \begin{tabular}{l|rrr}\toprule
   & $\gamma=0$ & % $0.0001$   &
    $0.001$ & $0.01$ \\\midrule
    $\|\theta_1-\theta_2\|_2$ & 3.97\textpm 0.13
         %  & 3.90 % \textpm 0.26
            &  3.50\textpm 0.19
            &  1.79\textpm 0.17
            \\
    $\|g_1-g_2\|_2$  & 0.295\textpm 0.075
       %  & 0.282 % \textpm 0.130
        &0.150\textpm 0.034
        & 0.003\textpm 0.001
        \\\bottomrule
    \end{tabular}
    \label{tab:wd_scratch}
\end{table}

\begin{table}[t]
    \small
    \centering
    \caption{\small Effect of curvature on WHO (with \textbf{retraining}).}
    \begin{tabular}{l|rrr}\toprule
   & ReLU  & SP ($\beta=10$) & SP ($\beta=5$) \\\midrule
    $\|\theta_1-\theta_2\|_2$ &  3.50\textpm 0.19
            &  2.60\textpm 0.05
            &   1.97\textpm  0.12
            \\
    $\|g_1-g_2\|_2$& 0.150\textpm 0.034
            &   0.102\textpm 0.040
            &  0.076\textpm 0.024
            \\\bottomrule
    \end{tabular}
    \label{tab:curv_scratch}
\end{table} 

\begin{figure}[t]
    \centering
    \includegraphics[width=0.48\textwidth]{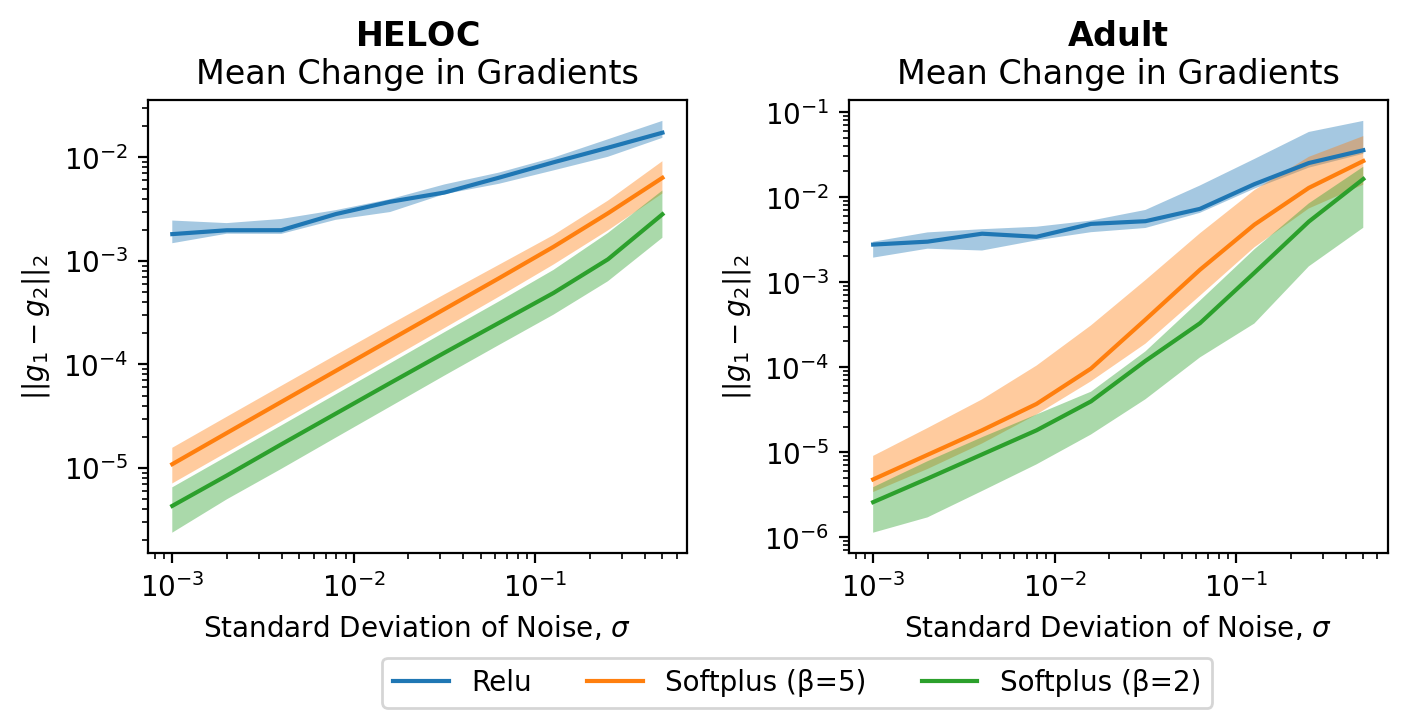}
    \caption{\small Effect of model curvature on gradient stability when \textbf{retraining} on HELOC and Adult datasets. The x-axis is the size of the data shift, represented by the standard deviation of noise, $\sigma$.}
    \label{fig:retraining_curvature_grads}
\end{figure}

\begin{figure*}[t]
    \centering
    \includegraphics[width=0.95\textwidth]{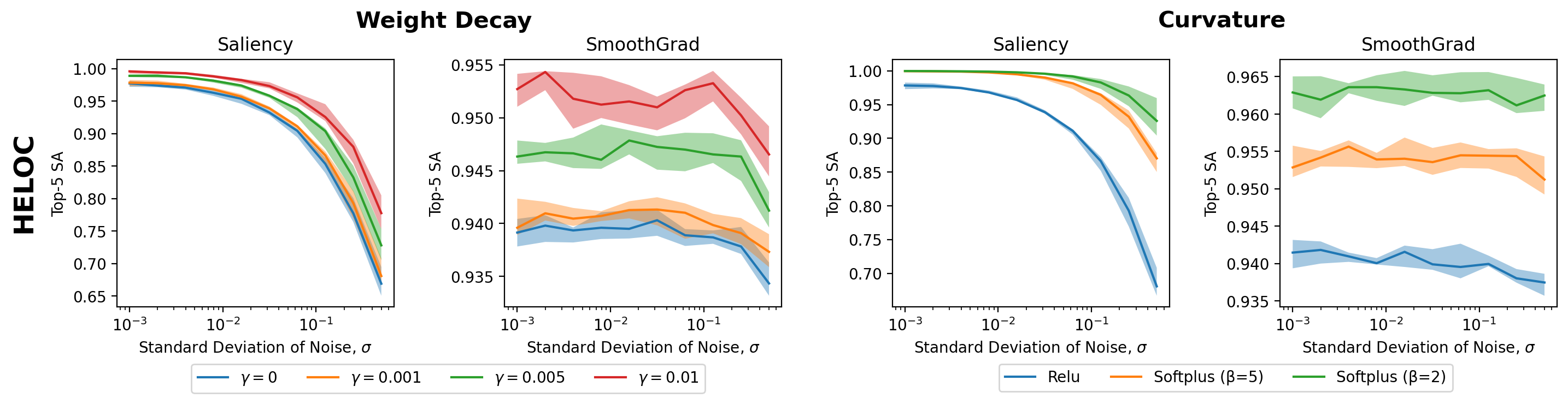}
    \caption{ \small Top-5 SA for the HELOC dataset under model retraining. The left two graphs show the effect of weight decay and the right two graphs show the effect of curvature. Graphs 1 and 3 show Top-5 SA for saliency, while graphs 2 and 4 show Top-5 SA for SmoothGrad. Confidence intervals represent the middle 50\% of values. See the appendix for other datasets and explanation techniques.}
    \label{fig:explanations_retraining}
\end{figure*}

To complete the analysis, we consider four feature attribution techniques: input gradients, input gradients with SmoothGrad, LIME, and kernel SHAP~\cite{shap,lime,smoothgrad,Simonyan2013DeepIC}. 
We use three top-K stability metrics adapted from Brunet et al.~\cite{brunet2022implications}. They each take in a pair of top-K feature sets corresponding to two models. Sign Agreement (SA) returns the fraction of top-K features that appear in both models' top-K features and have the same signed value. Consistent Direction of Contribution (CDC) is binary (per-sample) and measures whether all features in the top-K (for either model) have the same signed value in the other model. Signed-Set Agreement (SSA) is also binary (per-sample) and is 1 if the two model's top-K feature sets contain the same features and the features have the same signed value (the order of the top-K features does not matter).

\paragraph{Relaxing Training Assumptions}
From \Cref{tab:wd_scratch}, we see that increasing the algorithm's weight decay significantly decreases both parameters and gradients.  
\Cref{fig:retraining_gamma_grads} confirms that a larger weight decay during training also increases gradient stability for models with synthetic noise (analogous results for parameter stability are included in the appendix). 
\Cref{fig:retraining_curvature_grads} and \Cref{tab:curv_scratch} show that the curvature trends discovered in \S \ref{sec:finetune} still hold, i.e., lower model curvature induces higher gradient stability.

Although curvature and weight decay increase explanation stability upon retraining, we note parameter stability scores are much lower for retraining than for fine-tuning given fixed curvatures and weight decays. For example, the mean change in parameters when using ReLU with a weight decay of 0.001 is 1.53\textpm 0.03 when fine-tuning, versus 3.50\textpm 0.19 when retraining. The difference in gradient change between the two techniques is less stark, however, suggesting that even though the fine-tuned models are more similar to each other overall, the retrained models are sufficiently similar on the parts of the input domain represented in the test set.

\paragraph{Using Diverse Explanation Techniques}

\Cref{fig:explanations_retraining} shows Top-5 SA stability for two popular explanation techniques, namely, Saliency (input gradients) and SmoothGrad~\cite{kokhlikyan2020captum}. As in the previous experiments, models with low curvature and large weight decay are more effective at preserving explanation stability. In case of saliency, the explanation stability starts to decrease sharply for shift magnitudes larger than $\mathcal{N}(0,0.1^2)$. For SmoothGrad, the noise has less of an effect, which we hypothesize is because SmoothGrad is more stable overall as it computes an averaged gradient over a small local neighborhood of a given instance (see \Cref{tab:all_res}). Lowering the model's curvature and increasing the weight decay similarly improve explanation stability under data shifts for the other metrics and explanation techniques (see the appendix for a full set of graphs).

\Cref{tab:all_res} gives a more detailed breakdown of top-K feature attribution stability across all datasets and explanation techniques. The table corresponds to a single modeling choice -- ReLU with weight decay 0.001. We notice trends based on the \emph{explanation technique}, \emph{explanation metric}, and \emph{dataset}. 

The different explanation techniques exhibit vastly different stabilities under dataset shifts. Notably, SmoothGrad outperforms all other explanation techniques, obtaining scores of over 90\% for both Top-3 and Top-5 explanation stability for all of the metrics we used. That is, over 90\% of test samples' attributions had the same signs for all of each model's top-K features (CDC), over 90\% of test samples' attributions had the same signed features in the top-K (SSA), and on average, over 90\% of the top-K features for each test sample appeared in the other model's top-K features (SA).

To compare the explanation metrics, we look at the relative difference between SSA and each of SA and CDC (since SSA is strictly stronger than either of the other metrics). We see that neither SA nor CDC is consistently more stable, suggesting that the failure to have identical top-K feature sets stems both from different features ranking as the most influential (i.e., what SA captures) and also whether the top features have the same sign in the other model (i.e., CDC).

We also see that different datasets exhibit different levels of explanation stability. The top-3 and top-5 scores for the Adult datasets are very high; however, this dataset only has six features, so there are fewer possible feature combinations. We see that SSA scores for Adult are much lower than the SA scores, indicating that even with only 6 features, the learnt models often disagree on different features' impacts.

\begin{table*}[t]
\small 
\centering
\caption{\small Explanation stability scores for all datasets. The synthetically-shifted datasets (HELOC and Adult) are modified by $\mathcal{N}(0,0.1^2)$ noise. Models use ReLU activation and weight decay $\gamma=0.001$. All values averaged across all test samples and then across all trials. Error bounds are such that the lower and upper bound values represent the 25$^\text{th}$ and 75$^\text{th}$ percentile respectively.}
\label{tab:all_res}
\begin{tabular}{ll|lll|lll}\toprule
                       & Explanation     & \multicolumn{3}{c|}{Top-3} & \multicolumn{3}{c}{Top-5} \\
Dataset                & technique   &  SA          & CDC       & SSA                     & SA      & CDC    & SSA    \\\midrule
\multirow{4}{*}{WHO}   & Saliency    & 0.63\textpm 0.01   & 0.83\textpm 0.03  & 0.18\textpm 0.03  & 0.61\textpm 0.03   & 0.61\textpm 0.10  &  0.03\textpm 0.01 \\
                       & SmoothGrad  & 0.94\textpm 0.00 & 0.94\textpm 0.01 & 0.91\textpm 0.00 &   0.94\textpm 0.00 & 0.92\textpm 0.00  &  0.90\textpm 0.00 \\
                       & LIME        & 0.69\textpm 0.09 & 0.35\textpm 0.24  & 0.27\textpm 0.20 & 0.59\textpm 0.57   & 0.08\textpm 0.05  & 0.06\textpm 0.05  \\
                       & K.SHAP      & 0.58\textpm 0.10  & 0.60\textpm 0.39  & 0.13\textpm 0.08 & 0.61\textpm 0.10   & 0.41\textpm 0.31  &  0.05\textpm 0.05 \\\midrule
\multirow{4}{*}{HELOC} & Saliency    &  0.51\textpm 0.02  & 0.88\textpm 0.03  & 0.07\textpm 0.01  &  0.55\textpm 0.02  &  0.71\textpm 0.05  &  0.01\textpm 0.00  \\
                       & SmoothGrad  & 0.94\textpm 0.00 & 0.96\textpm 0.00 & 0.90\textpm 0.00 & 0.94\textpm 0.00 &  0.94\textpm 0.00 &  0.90\textpm 0.00 \\
                       & LIME        & 0.63\textpm 0.02 & 0.45\textpm 0.04  & 0.18\textpm 0.03 &  0.66\textpm 0.02  &  0.16\textpm 0.03 &  0.07\textpm 0.02 \\
                       & K.SHAP      &  0.64\textpm 0.02  & 0.91\textpm 0.03  & 0.19\textpm 0.03 &  0.70\textpm 0.02 &  0.79\textpm 0.05  &  0.09\textpm 0.02 \\\midrule
\multirow{4}{*}{Adult} & Saliency    & 0.87\textpm 0.01  & 0.92\textpm 0.01  &  0.62\textpm 0.04 & 0.85\textpm 0.01  & 0.57\textpm 0.04 &  0.38\textpm 0.03 \\
                       & SmoothGrad  &  0.98\textpm 0.00 & 0.98\textpm 0.00  &0.96\textpm 0.00  & 0.98\textpm 0.00  &0.96\textpm 0.00  & 0.94\textpm 0.00 \\
                       & LIME        & 0.95\textpm 0.00 & 0.86\textpm 0.02 & 0.86\textpm 0.02 &  0.97\textpm 0.00 &  0.84\textpm 0.02 & 0.84\textpm 0.02 \\
                       & K.SHAP      & 0.88\textpm 0.01 &  0.88\textpm 0.04 &  0.65\textpm 0.04 & 0.84\textpm 0.02 & 0.47\textpm 0.10  &  0.32\textpm 0.08  \\\bottomrule    
\end{tabular}
\end{table*}

\subsection{Sensitivity Analysis}\label{sec:sensitivity}

In this section, we analyze the impact of other training hyperparameters on explanation stability, epoch-by-epoch throughout the entire training process.

Given our observation that parameter stability promotes explanation stability, we expect a lower number of epochs and a lower learning rate to both increase explanation stability under dataset shift by decreasing the divergence between optimization paths in parameter space. %For instance, a lower learning rate for all models corresponds to shrinking the paths in \Cref{fig:sensitivity_explanation} towards the start epoch, resulting in retrained model paths that are closer to the base model (lowering the number of training epochs has similar effects).
% We also expect %in \Cref{fig:sensitivity_explanation}
% that if the base model is trained for longer, then for a given dataset shift, the retrained model's most similar parameters to the base model will occur relatively sooner in the training process, based on the geometry of divergent optimization paths.
Based on the geometry of two divergent optimization paths, we anticipate that if the base model is trained for a long period of time, the retrained model's parameters that are most similar to the base model will occur relatively sooner in the retraining process.
This emphasizes the challenge of retraining models that both perform well and remain close to the base model.

%as dataset shift increases, we expect the closest parameters to the base model to occur relatively sooner, based on the assumption that larger dataset shifts yield more divergent models in expectation. The same is true

We track explanations during each epoch of retraining, measuring similarity using gradient $\ell_2$ distance and top-K SA consistency. Each trial consists of varying a single hyperparameter while keeping all others fixed. For each value of the varied hyperparameter, we compare one base model with 10 models retrained on synthetically shifted HELOC data $\sim\mathcal{N}(0,0.1^2)$. Recall that \cite{hardt2016train} studied the hypothesis stability of stochastic gradient descent (SGD) and showed that practical choices such as learning rate and number of epochs of training can impact the stability of model parameters; the following experiments confirm these findings in the context of explanation stability.

\paragraph{Learning Rate}

We verify in \Cref{fig:sensitivity_lr} that increasing the learning rate of all models effectively worsens gradient stability. % (base models are trained for 30 epochs).
We observe that the maximal gradient stability occurs sooner when the learning rate is higher. We hypothesize that this is because a high learning rate stretches the optimization path of the base model, taking it further away from the initialization in parameter space. Since the retrained models take divergent paths from the base model (due to the dataset shifts), the likelihood that they achieve similar parameters decreases as learning rate increases.

\paragraph{Learning Rate Decay} Building upon the understanding of the effect of learning rate on gradient stability, we delve into its counterpart, learning rate decay, and its influence, as depicted in \Cref{fig:sensitivity_lrd}. The results are as expected: decreasing the decay value, i.e. increasing the amount of decay, shows similar behaviour to decreasing the learning rate. Namely, the optimization trajectories have less chance to diverge, and the resultant base model and retrained models are thus more likely to have similar parameters. While a higher amount of decay (lower decay value) is preferable, this can inhibit the training process (notice how a decay value of 0.9 causes training to stop after around 20 epochs).

\paragraph{Batch Size}

In \Cref{fig:sensitivity_batch}, we see that decreasing batch size worsens gradient stability. % for the same reasons (it increases the number of gradient descent steps). We see the same behaviour as previously regarding the relative position of the peaks. 
We hypothesize that this behaviour is because there are a greater number of steps in each epoch, inducing a higher level of randomness in the optimization trajectory. Each step adds an opportunity for the retrained model's path to deviate from the initial base model's path, due to the dataset shifts. As such, the likelihood of achieving similar parameters between the retrained models and the base models decreases with smaller batch sizes. Additionally, the points at which explanation stability is maximized occur after fewer epochs for smaller batch sizes, since those correspond to both longer and more divergent optimization trajectories.

% moving here to try to save space

\begin{figure*}[!ht]
  \centering
  \small
  \subfloat[][Sensitivity to \textbf{learning rate}.]{\includegraphics[width=0.48\textwidth]{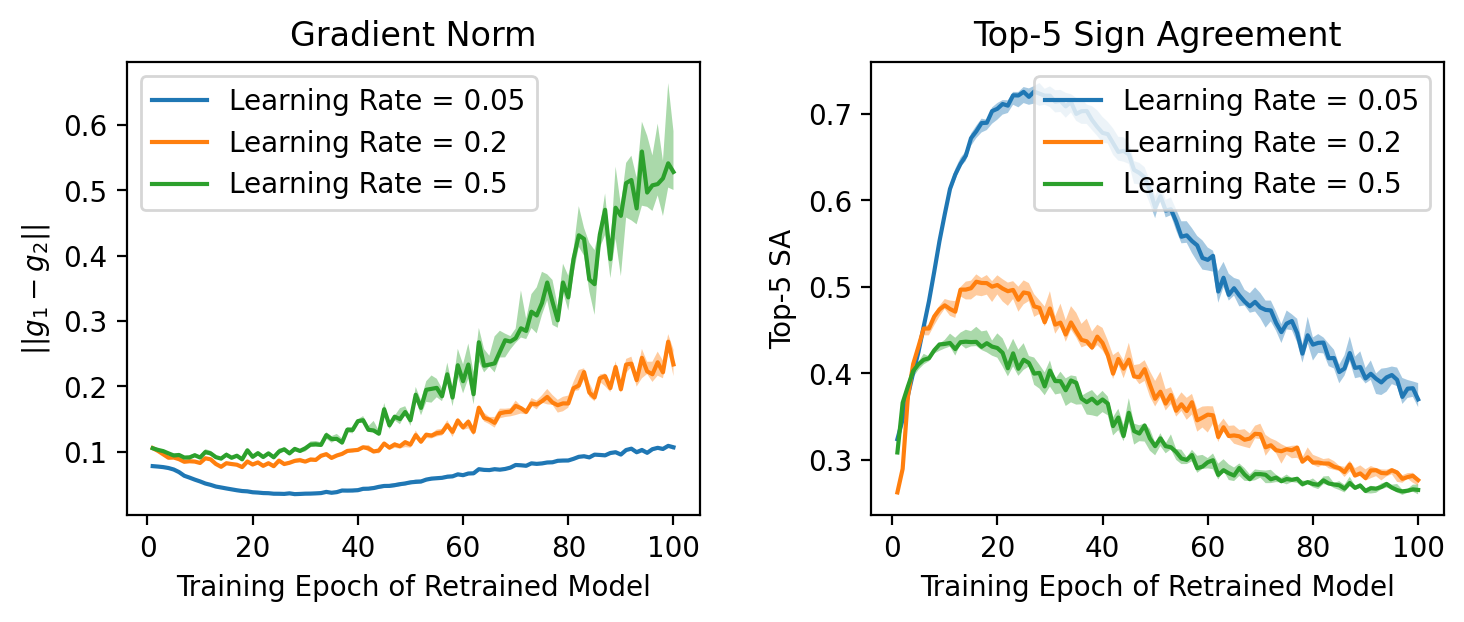}\label{fig:sensitivity_lr}}\quad
  \subfloat[][Sensitivity to \textbf{learning rate decay}.]{\includegraphics[width=0.48\textwidth]{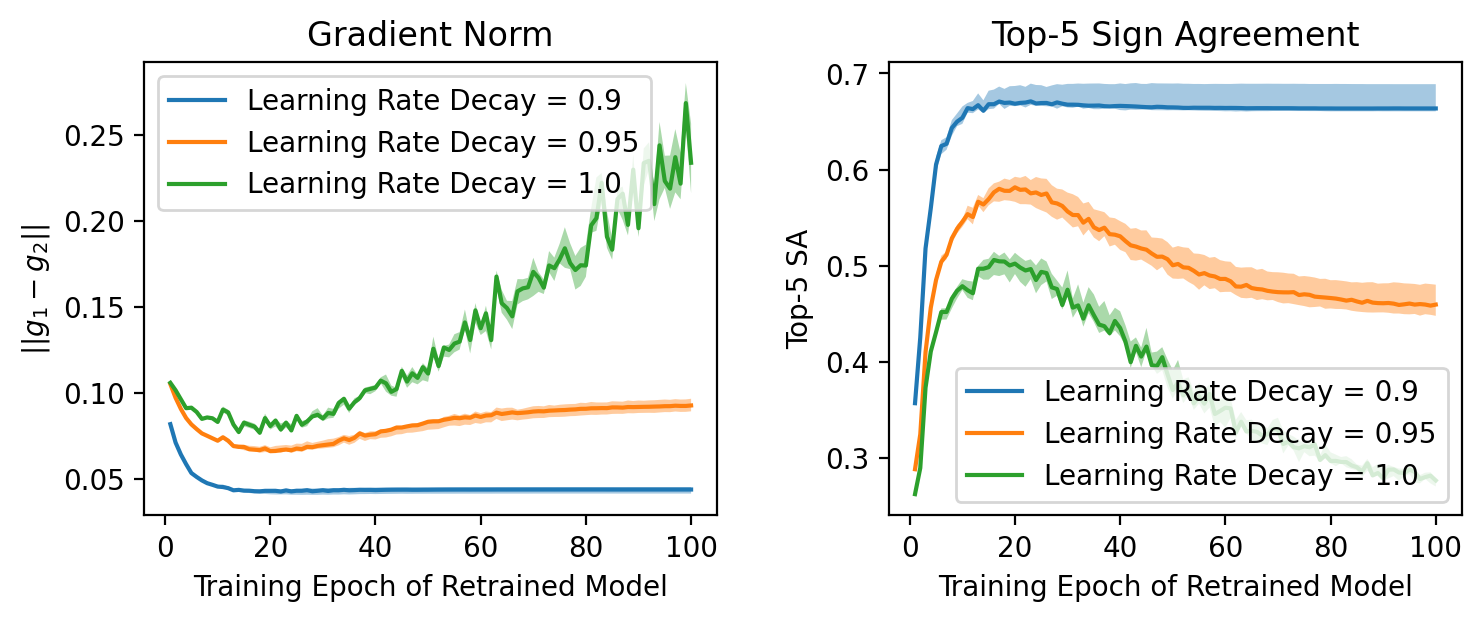}\label{fig:sensitivity_lrd}}\\
  \subfloat[][Sensitivity to \textbf{batch size}.]{\includegraphics[width=0.48\textwidth]{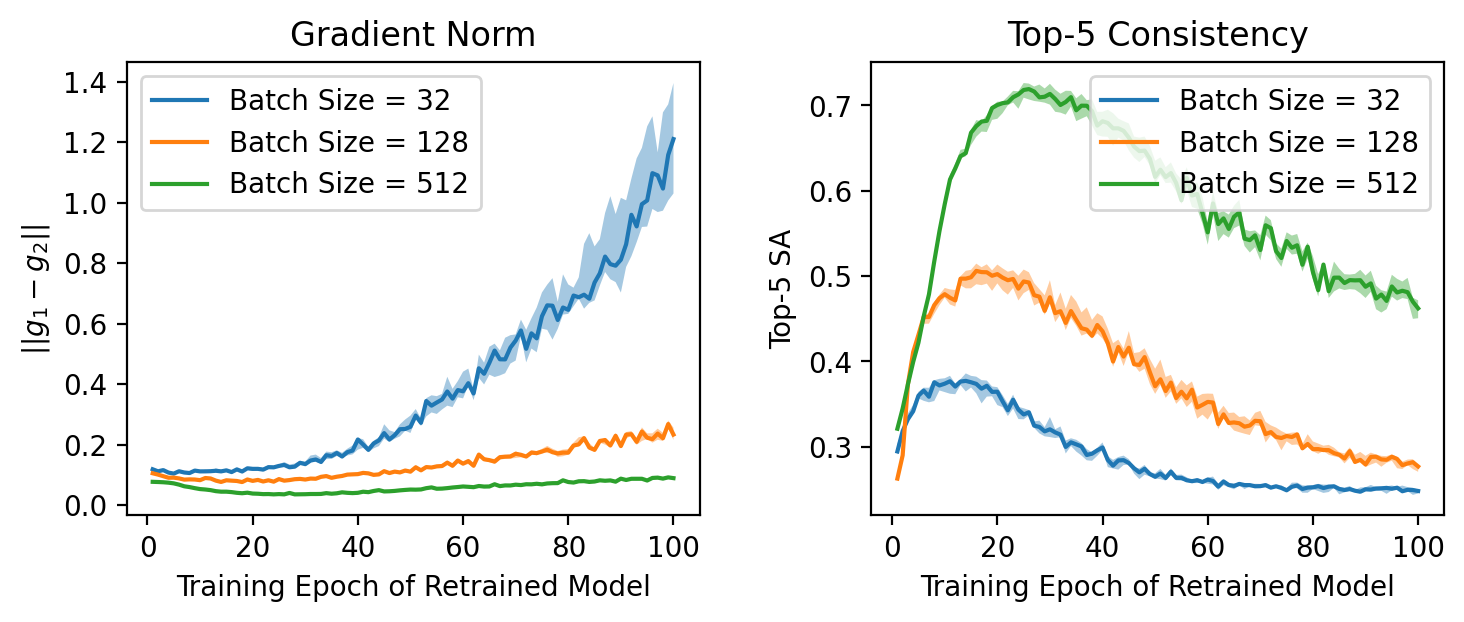}\label{fig:sensitivity_batch}}\quad
  \subfloat[][Sensitivity to the base model's number of \textbf{epochs}.]{\includegraphics[width=0.48\textwidth]{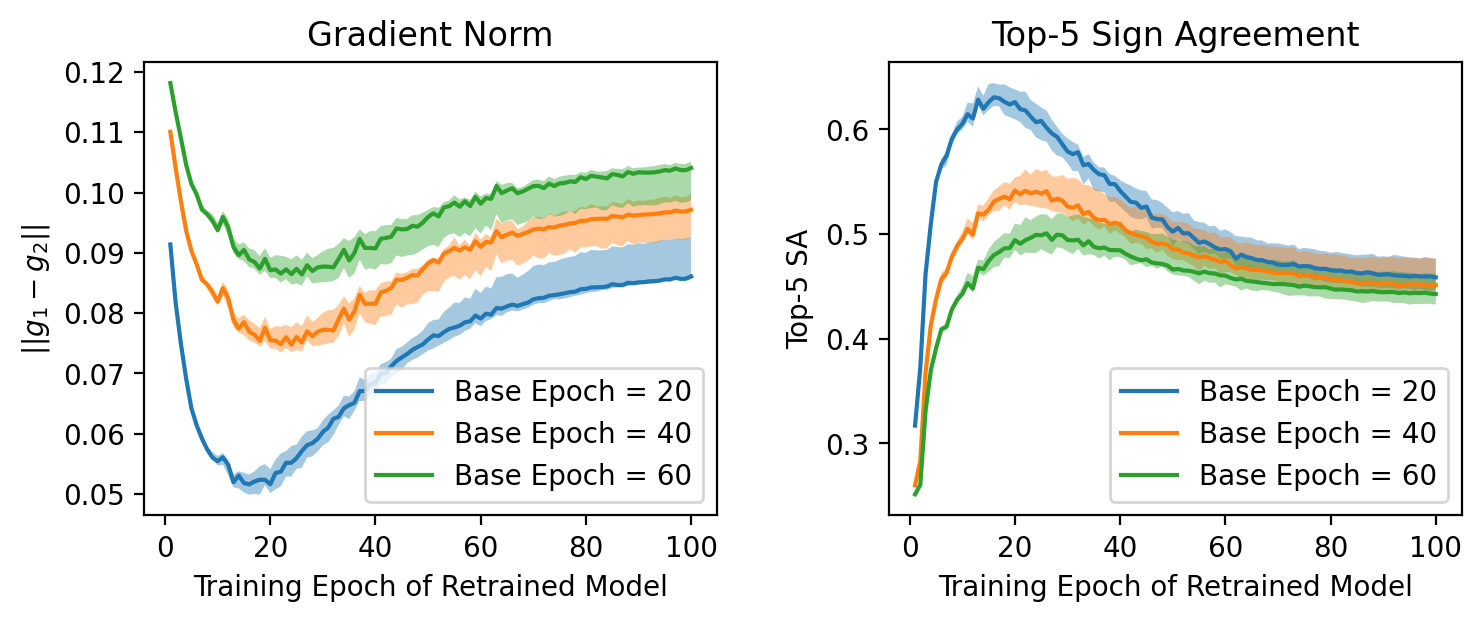}\label{fig:sensitivity_base}}
  \caption{\small Raw gradient similarity (y-axis) of retrained models are shown per epoch of training (x-axis). For gradient norm, lower values are better, while for top-5 SA, higher values (closer to 1) are better. Default values are: learning rate of 0.2; learning rate decay of 1.0 (no decay); batch size of 128; base models are trained for 30 epochs (besides the bottom right); and synthetic noise, $\sigma=0.1$.}
  \label{fig:sensitivity}
\end{figure*}

% \begin{figure}
%     \centering
%     \begin{subfigure}[t]{0.25\textwidth}
%         \centering
        
%         \caption{\small Sensitivity to \textbf{learning rate}. Gradient similarity (y-axis) of retrained models are shown per epoch of training (x-axis).}%Learning rate decay is 1.0 (no decay).}
%         \label{fig:sensitivity_lr}
%     \end{subfigure}
%     \hfill
%     \begin{subfigure}[t]{0.25\textwidth}
%         \centering
%         \includegraphics[width=\textwidth]{figures/epoch_lrd.png}
%         \caption{\small Sensitivity to \textbf{learning rate decay}. Gradient similarity (y-axis) of retrained models are shown per epoch of training (x-axis).}
%         \label{fig:sensitivity_lrd}
%     \end{subfigure}
%     \hfill
%     \begin{subfigure}[t]{0.25\textwidth}
%         \centering
%         \includegraphics[width=\textwidth]{figures/epoch_batch.png}
%         \caption{\small Sensitivity to \textbf{batch size}. Gradient similarity (y-axis) of retrained models are shown per epoch of training (x-axis).}
%         \label{fig:sensitivity_bs}
%     \end{subfigure}
%     \begin{subfigure}[t]{0.25\textwidth}
%         \centering
%         \includegraphics[width=\textwidth]{figures/epoch_base.png}
%         \caption{\small Sensitivity to the base model's number of \textbf{epochs}. Gradient similarity (y-axis) of retrained models are shown per epoch of training (x-axis). Learning rate is 0.2.}
%         \label{fig:sensitivity_base}
%     \end{subfigure}
%     \caption{Caption}
%     \label{fig:enter-label}
% \end{figure}

\paragraph{Base Model Epochs}

\Cref{fig:sensitivity_base} verifies that as the number of base model epochs is increased, explanation stability suffers, with significantly larger gradient changes and lower top-5 consistency scores. For a sufficiently small base model epoch of 20, we observe that similarity also peaks at around 20 training epochs. However, if we were to increase the base model epoch to 60, as shown in green, we observe that the peak still occurs at around the same value. This means that, after just 20 epochs, the optimization path of the retrained model must have diverged so much from the base model that any further training leads only to a decrease in stability.

\section{Conclusions}
In this paper, we theoretically and empirically characterize how modelling choices impact model explanations' stability to retraining under data shifts. Our formal results suggest, and our experimental results confirm, that model explanations can be made more stable to retraining after data shifts by using low-curvature activation functions and large weight decay values. We also took a first step towards empirically characterizing how different explanation techniques yield variable explanation stabilities under dataset shifts, noting that SmoothGrad outperforms other methods.

The message for ML practitioners is clear: in order to preserve explanations upon model retraining due to data shifts at little cost to accuracy, use softplus with the lowest possible $\beta$, and the largest weight decay that does not affect predictive accuracy. If possible, practitioners should  fine-tune the model (\S \ref{sec:finetune}) rather than retrain from scratch to avoid instability due to the multiplicity of optimal solutions. However, as shown in \S \ref{sec:sensitivity}, training hyperparameters can also play a large role in explanation stability, so care must be taken to choose hyperparameters in a way that balances accuracy and explanation stability.

There are several directions for future work.
First, our theoretical results are only applicable to differentiable models. In the appendix, we empirically evaluate explanation stability when retraining decision trees, random forests, and XGBoost models, but additional work is needed to formalize explanation stability to retraining for these and other classes of models. Second, while we do perform experiments with a temporal dataset shift, we do not evaluate other types of naturally-occurring data shifts like geographic shifts or data corrections (e.g., as with the German Credit Statlog dataset~\cite{uci-data}). Future work should focus on understanding explanation stability for various types of real-world data shifts. Third, we limit our evaluation to tabular datasets due to the commonality of these datasets where laws like a Right to Explanation are relevant. However, other sources of data may become more relevant to user explanations in the future and validating our theoretical results on non-tabular data is of independent interest.
Finally, rather than sticking to existing explanation techniques, future work can explore how to develop novel explanation techniques that preserve explanation stability despite small changes in the underlying models and data.

% \begin{contributions} % will be removed in pdf for initial submission 
% 					  % (without ‘accepted’ option in \documentclass)
%                       % so you can already fill it to test with the
%                       % ‘accepted’ class option
%     Briefly list author contributions. 
%     This is a nice way of making clear who did what and to give proper credit.
%     This section is optional.

%     H.~Q.~Bovik conceived the idea and wrote the paper.
%     Coauthor One created the code.
%     Coauthor Two created the figures.
% \end{contributions}

\begin{acknowledgements} 
    The authors would like to thank the anonymous reviewers for their helpful feedback and all the funding agencies listed below for supporting this work. This work is supported in part by the NSF awards \#IIS-2008461, \#IIS-2040989, and \#IIS-2238714, and research awards from Google, JP Morgan, Amazon, Harvard Data Science Initiative, and D\^3 Institute at Harvard. The views expressed here are those of the authors and do not reflect the official policy or position of the funding agencies.
\end{acknowledgements}

% References
\pagebreak
\small
\bibliographystyle{abbrv}
\bibliography{refs}

\onecolumn
\appendix

\section{Proofs}

In this section, we provide the proofs for the theory presented in the main paper. We present the notations here again for convenience in Table \ref{tab:theory}. As with the main paper, we first bound parameter shift in \S \ref{subsec:paramshift} and then explanation shift in \S \ref{subsec:explnshift}.

\begin{table*}[t]
    \centering
    \caption{Notations}
    \label{tab:theory}
    \begin{tabular}{r|l|l}
        $z_i$ & $(x_i, y_i);~ x_i \in \R^d;~y \in \R$ & data point \\ 
        $\D_1$ & $\{z_1,..z_n\}$ & original training set\\
        $\D_2$ & $\{z'_1, ... z'_n \}$ & new training set\\
        $\ell_{reg}(\theta, z_i)$ & $\ell(\theta, z_i) + \gamma \| \theta \|^2_2$; ~ $\ell(\theta, z_i) \in \R^+$ & regularized loss function \\
        $\ell_{\D_1}(\theta)$ & $\frac{1}{n} \sum_{i=1}^n \ell_{reg}(\theta, z_i)$ & loss incurred by $\theta$ on dataset $D_1$ \\
        $\theta_1$ & $\arg\min_{\theta} \ell_{\D_1}(\theta)$ & optimal weights on dataset $D_1$\\
        $\epsilon_1 + \gamma \| \theta_1 \|^2$& $\ell_{\D_1}(\theta_1) = \min_{\theta} \ell_{\D_1}(\theta)$ & optimal loss value on dataset $\D_1$
    \end{tabular}
\end{table*}

\subsection{Bounding the parameter shift}\label{subsec:paramshift}
In order to bound the parameter shift, we first prove an intermediate result connecting the loss on $\D_2$ of the optimum obtained by minimizing the loss on $\D_1$. 

\begin{lemma}\label{lemma:loss}
The loss $\ell_{\D_2}$ on $\theta_1$ is given by

\begin{align*}
    \ell_{\D_2}(\theta_1) \leq \epsilon_1 + \gamma \| \theta_1 \|^2 + \mathcal{L}_x(\theta_1) d(\D_1, \D_2) 
\end{align*}

\end{lemma}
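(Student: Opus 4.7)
My plan is to relate $\ell_{\D_2}(\theta_1)$ to $\ell_{\D_1}(\theta_1)$ directly by comparing the two empirical averages pointwise, and then invoke the input-Lipschitz property of the loss to exchange per-sample $\ell_2$ distances for the Hungarian distance. Since both losses use the same $\theta_1$ and the same weight decay term $\gamma\|\theta_1\|^2$, the regularizer cancels, so the entire argument reduces to bounding the difference of the unregularized empirical risks.

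First I would write
\begin{equation*}
    \ell_{\D_2}(\theta_1) - \ell_{\D_1}(\theta_1) = \tfrac{1}{n}\sum_{i=1}^n \bigl[\ell(\theta_1, z'_i) - \ell(\theta_1, z_i)\bigr].
\end{equation*}
Because the sum over $\D_1$ is invariant under any reordering of its indices, I am free to re-index $\D_1$ by the permutation $P^\star$ that achieves the minimum in Definition~\ref{defn:hungarian}, giving
\begin{equation*}
    \ell_{\D_2}(\theta_1) - \ell_{\D_1}(\theta_1) = \tfrac{1}{n}\sum_{i=1}^n \bigl[\ell(\theta_1, z'_i) - \ell(\theta_1, z_{P^\star(i)})\bigr].
\end{equation*}
Next I would invoke the hypothesis that $\mathcal{L}_x(\theta_1)$ is the Lipschitz constant of $\ell(\theta_1,\cdot)$ with respect to the input (this follows from composing the input-Lipschitz network with a Lipschitz loss, with matched labels absorbed into the paired-sample convention), to get $|\ell(\theta_1,z'_i) - \ell(\theta_1, z_{P^\star(i)})| \le \mathcal{L}_x(\theta_1)\,\|x'_i - x_{P^\star(i)}\|_2$ for every $i$. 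Summing over $i$ and applying the definition of the Hungarian distance yields
\begin{equation*}
    \ell_{\D_2}(\theta_1) - \ell_{\D_1}(\theta_1) \le \mathcal{L}_x(\theta_1)\, d(\D_1, \D_2),
\end{equation*}
up to the $1/n$ averaging convention, after which I would rearrange and substitute $\ell_{\D_1}(\theta_1)=\epsilon_1+\gamma\|\theta_1\|^2$ to reach the stated bound.

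I expect no deep obstacle here; the argument is essentially a one-line change-of-measure through a permutation plus a Lipschitz estimate. The only subtle points worth flagging in the write-up are (i) justifying that the Lipschitz constant $\mathcal{L}_x(\theta_1)$ of the model transfers to the loss (standard for cross-entropy / squared loss when the targets are paired via $P^\star$), and (ii) making explicit that the Hungarian-optimal permutation is legitimate because the empirical risk on $\D_1$ does not depend on the order of summation, so we can always re-index $\D_1$ to align with $\D_2$ before applying the triangle-inequality-style Lipschitz step. Once these are stated, the bound follows immediately.
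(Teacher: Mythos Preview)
Your proposal is correct and follows essentially the same route as the paper's proof: decompose $\ell_{\D_2}(\theta_1)$ as $\ell_{\D_1}(\theta_1)$ plus the difference of empirical risks, bound each per-sample difference via the input-Lipschitz constant, and identify the resulting sum as $d(\D_1,\D_2)$. Your write-up is in fact slightly more careful than the paper's in explicitly invoking the Hungarian-optimal permutation and in flagging the $1/n$ normalization mismatch with Definition~\ref{defn:hungarian}.
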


\begin{proof}

\begin{align*}
    \ell_{\D_2}(\theta_1) &=\ell_{\D_1}(\theta_1) + \left( \ell_{\D_2}(\theta_1) - \ell_{\D_1}(\theta_1) \right) \\
    & \leq \ell_{\D_1}(\theta_1) + | \ell_{\D_2}(\theta_1) - \ell_{\D_1}(\theta_1) | \\
    & \leq \epsilon_1 + \gamma \p \theta_1 \p^2 +   \frac{1}{n} \sum_{i=1}^n | \ell(\theta_1, z_i) - \ell(\theta_1, z'_i) | ~~~~ \text{(Convexity of abs fn)}\\
    & \leq \epsilon_1 + \gamma \p \theta_1 \p^2 + \frac{1}{n} \sum_{i=1}^n \mathcal{L}_x(\theta_1) \p z_i - z'_i \p_2 ~~~~ \text{(Defn of Lipschitz)}\\
    &\leq \epsilon_1 + \gamma \p \theta_1 \p^2 + \mathcal{L}_x(\theta_1) \underbrace{\frac{1}{n} \sum_{i=1}^n \p z_i - z'_i \p}_{d(\D_1, \D_2)} \\
\end{align*}

\end{proof}

In the result above, $\epsilon_1 + \gamma \| \theta_1 \|^2$ is the optimal value of the regularized loss, and the term following that denotes the shift from the optimal value incurred. As mentioned in the paper, we use the following additional assumptions to derive the main result: (1) we minimize a regularized loss $\ell_{reg}$ that involves weight decay, i.e, $\ell_{reg}(\theta) = \ell(\theta) + \gamma \| \theta \|^2_2$; (2) $\ell$ is locally quadratic; (3) the learning algorithm returns a unique minimum $\theta$ given a dataset $\D$.

\begin{theorem} Given the assumptions stated above, and that $\mathcal{L}_x (\theta_1)$ is the Lipschitz constant of the model with parameters $\theta_1$, we have 

\begin{align*}
    \p \theta_2 - \theta_1 \p_2 & \leq \sqrt{\frac{ \mathcal{L}_x(\theta_1) d(\D_1, \D_2) }{\gamma }} + C
\end{align*}

where $\gamma$ is the weight decay regularization constant, and $C$ is a small problem-dependent constant.

\end{theorem}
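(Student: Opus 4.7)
The plan is to promote the loss-value bound from Lemma \ref{lemma:loss} into a parameter-distance bound by exploiting the locally quadratic structure of $\ell_{\mathcal{D}_2}$ around its minimizer $\theta_2$. First I would Taylor expand $\ell_{\mathcal{D}_2}$ to second order at $\theta_2$; since $\theta_2$ is a minimum by assumption 3, the gradient vanishes, leaving
$$\ell_{\mathcal{D}_2}(\theta_1) = \ell_{\mathcal{D}_2}(\theta_2) + \tfrac{1}{2}(\theta_1 - \theta_2)^\top H\,(\theta_1 - \theta_2),$$
where $H = \nabla^2 \ell(\theta_2) + 2\gamma I$ is the Hessian of the regularized loss. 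I would then lower bound $\lambda_{\min}(H) \geq 2\gamma$ using the weight decay contribution (with $\nabla^2 \ell(\theta_2)$ positive semi-definite at the minimum), which converts the Taylor identity into the strong-convexity-style inequality
$$\ell_{\mathcal{D}_2}(\theta_1) - \ell_{\mathcal{D}_2}(\theta_2) \;\geq\; \gamma \,\|\theta_1 - \theta_2\|_2^2.$$

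Next I would combine this with Lemma \ref{lemma:loss}. Writing $\ell_{\mathcal{D}_2}(\theta_2) = \epsilon_2 + \gamma\|\theta_2\|_2^2$ in the notation of Table \ref{tab:theory} and subtracting, I arrive at
$$\gamma\,\|\theta_1 - \theta_2\|_2^2 \;\leq\; (\epsilon_1 - \epsilon_2) + \gamma\bigl(\|\theta_1\|_2^2 - \|\theta_2\|_2^2\bigr) + \mathcal{L}_x(\theta_1)\,d(\mathcal{D}_1,\mathcal{D}_2).$$
Dividing by $\gamma$, taking a square root, and applying $\sqrt{a+b} \leq \sqrt{a}+\sqrt{b}$ to peel off the data-shift term yields the claimed bound, with
$$C \;=\; \sqrt{\frac{(\epsilon_1 - \epsilon_2) + \gamma\bigl(\|\theta_1\|_2^2 - \|\theta_2\|_2^2\bigr)}{\gamma}}$$
playing the role of the problem-dependent residual. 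This matches the proof-sketch remark that $C$ collapses when both datasets admit near-identical optimal losses and parameter norms.

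The main obstacle is justifying the Hessian lower bound $\lambda_{\min}(H) \geq 2\gamma$. The locally-quadratic assumption provides an exact second-order expansion but does not by itself force the unregularized Hessian to be positive semi-definite: at a minimum of $\ell_{reg}$ one only gets that $H$ is positive semi-definite, hence $\lambda_{\min}(H) \geq 0$. To upgrade this to the required $\geq 2\gamma$ one needs either a local convexity hypothesis on the unregularized loss or an explicit ``strong-minimum'' condition stated alongside assumption 2. A secondary subtlety is controlling the sign and magnitude of the terms inside $C$; the cleanest fix is to apply Lemma \ref{lemma:loss} symmetrically with the roles of $\mathcal{D}_1$ and $\mathcal{D}_2$ swapped, which bounds $|\epsilon_1 - \epsilon_2| = O(\mathcal{L}_x\,d(\mathcal{D}_1,\mathcal{D}_2))$ and lets $C$ be absorbed into the leading $\sqrt{\mathcal{L}_x\,d/\gamma}$ term for small shifts.
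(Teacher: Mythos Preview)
Your proposal is correct and follows essentially the same route as the paper: Taylor expand $\ell_{\mathcal{D}_2}$ at $\theta_2$, drop the first-order term by optimality, lower-bound the quadratic via $\lambda_{\min}(H) \geq 2\gamma$, and combine with Lemma~\ref{lemma:loss} to reach $\gamma\|\theta_1-\theta_2\|_2^2 \leq (\epsilon_1-\epsilon_2)+\gamma(\|\theta_1\|_2^2-\|\theta_2\|_2^2)+\mathcal{L}_x(\theta_1)d(\mathcal{D}_1,\mathcal{D}_2)$; your constant $C$ coincides with the paper's $C=\sqrt{(\ell_{\mathcal{D}_1}(\theta_1)-\ell_{\mathcal{D}_2}(\theta_2))/\gamma}$. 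Your caveat about $\lambda_{\min}(H)\geq 2\gamma$ is well taken---the paper simply asserts that $\nabla_\theta^2\ell(\theta_2)$ is positive (semi-)definite ``due to global optimality'' without further justification, so you have in fact been more careful than the original on this point.
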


\begin{proof}

Assuming the loss $\ell_{D_2}$ is locally quadratic around the optimal solution $\theta_2$, we can employ a second order Taylor series expansion as follows: 

\begin{align}
    \ell_{D_2}(\theta_1) &= \ell_{D_2}(\theta_2) + \nabla_{\theta} \ell_{D_2}(\theta_2)^\top (\theta_1 - \theta_2) + \frac{1}{2} (\theta_1 - \theta_2)^\top H_{\theta_2} (\theta_1 - \theta_2) \\
    &= \epsilon_2 + \gamma \p \theta_2 \p_2^2 + \frac{1}{2} (\theta_1 - \theta_2)^\top H_{\theta_2} (\theta_1 - \theta_2) ~~~~ (\text{global optimality of $\theta_2$ on $S_2$})
\end{align}
In the statement above we have used the fact that $\nabla_{\theta} \ell_{\D_2}(\theta_2) = 0$ because of first order optimality criteria, and that $\epsilon_2 + \gamma \| \theta_2 \|^2 = \ell_{\D_2}(\theta_2)$, is the optimal value. Now we use Lemma \ref{lemma:loss} to substitute $\ell_{\D_2}(\theta_1)$, and we have:

\begin{align}
    \epsilon_2 + \gamma \p \theta_2 \p_2^2 + \frac{1}{2} (\theta_1 - \theta_2)^\top H_{\theta_2} (\theta_1 - \theta_2) &\leq \epsilon_1 + \gamma \p \theta_1 \p_2^2 + \mathcal{L}_x(\theta_1) d(\D_1, \D_2)  \\
    \frac{1}{2} \lambda_{\min}(H_{\theta_2}) \p \theta_1 - \theta_2 \p^2_2 &\leq (\epsilon_1 - \epsilon_2) + \gamma (\p \theta_1 \p_2^2 - \p \theta_2 \p_2^2) + \mathcal{L}_x(\theta_1) d(\D_1, \D_2) \label{eqn:4}
\end{align}

To derive the second step we have used the following fact that a lower bound on the quadratic term is given by the lowest eigenvalue i.e., $\arg\min_{\theta; \| \theta \| = 1} \theta^{\top} H \theta = \lambda_{\min} \implies \theta^{\top} H \theta \geq \lambda_{\min} \| \theta \|^2$. We now observe the following relationship for $\lambda_{min}$:

\begin{align}
    \lambda_{\min}(\nabla_{\theta}^2 \ell_{reg}(\theta_2)) =  \lambda_{\min}(\nabla_{\theta}^2 \ell(\theta_2)) + 2 \gamma \geq 2 \gamma \label{eqn:5}
\end{align}

as $\nabla_{\theta}^2 \ell(\theta_2)$ is positive definite due to global optimality. Putting eqn. \ref{eqn:5} in eqn. \ref{eqn:4}, we have the final result. Here, the ``problem dependent constant'' $C = \sqrt{\frac{\ell_{\D_1}(\theta_1) - \ell_{\D_2}(\theta_2)}{\gamma}}$, which we can expect to be quite small in practice if the optimal values $\ell_{\D_1}(\theta_1) \approx \ell_{\D_2}(\theta_2)$. Thus in practice we can assume $C$ to be very small.

\end{proof}

\subsection{Bounding the explanation shift}\label{subsec:explnshift}

\begin{lemma}
The parameter-input Lipschitz has the following property: 

\begin{align*}
    \E_{\X \sim \D} \p \nabla_x f(x; \theta_1) - \nabla_x f(x; \theta_2) \p_2 \leq~ \E_{\theta \in \Theta} \E_{\X \in \D} \| \nabla_{\theta} \grad f(x, \theta) \|_2 \times  \p \theta_2 - \theta_1 \p_2
\end{align*}

where $\Theta = \{ \theta_{\lambda} = \lambda \theta_2 + (1 - \lambda) \theta_1 \mid \lambda \in [0,1] \}$

\end{lemma}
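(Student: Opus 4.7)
The plan is to use the fundamental theorem of calculus along the straight-line path in parameter space from $\theta_1$ to $\theta_2$, then take norms and expectations. Parametrize the segment as $\theta_{\lambda} = \lambda \theta_2 + (1-\lambda)\theta_1$ for $\lambda \in [0,1]$, and assume enough smoothness in $\theta$ (which follows from the smooth activation assumption used later) so that $\lambda \mapsto \nabla_x f(x;\theta_{\lambda})$ is continuously differentiable.

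First, I would apply the fundamental theorem of calculus componentwise to the vector-valued map $\lambda \mapsto \nabla_x f(x;\theta_{\lambda})$. Since $\tfrac{d}{d\lambda}\theta_{\lambda} = \theta_2 - \theta_1$, the chain rule gives
\begin{align*}
    \nabla_x f(x;\theta_2) - \nabla_x f(x;\theta_1) = \int_0^1 \nabla_\theta \nabla_x f(x;\theta_{\lambda})\,(\theta_2 - \theta_1)\, d\lambda,
\end{align*}
where $\nabla_\theta \nabla_x f(x;\theta_{\lambda})$ is the mixed second-derivative matrix (Jacobian in $\theta$ of the $d$-dimensional input gradient).

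Next I would take $\|\cdot\|_2$ of both sides and push the norm inside the integral via the standard integral triangle inequality, then bound the matrix-vector product by the induced operator norm:
\begin{align*}
    \p \nabla_x f(x;\theta_2) - \nabla_x f(x;\theta_1) \p_2
    &\leq \int_0^1 \p \nabla_\theta \nabla_x f(x;\theta_{\lambda})\,(\theta_2-\theta_1) \p_2 \, d\lambda \\
    &\leq \p \theta_2 - \theta_1 \p_2 \int_0^1 \p \nabla_\theta \nabla_x f(x;\theta_{\lambda}) \p_2 \, d\lambda.
\end{align*}
Reinterpreting the integral over $\lambda$ as an expectation with respect to the uniform distribution on the segment $\Theta$ (which is the meaning of $\E_{\theta \in \Theta}$ implicit in Definition~\ref{defn:pilp}), this becomes $\p \theta_2 - \theta_1 \p_2 \cdot \E_{\theta \in \Theta} \p \nabla_\theta \nabla_x f(x;\theta) \p_2$.

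Finally, I would take $\E_{\X \sim \D}$ on both sides and swap the order of the two expectations by Fubini (justified by nonnegativity of the integrand), producing
\begin{align*}
    \E_{\X \sim \D} \p \nabla_x f(x;\theta_1) - \nabla_x f(x;\theta_2) \p_2
    \leq \E_{\theta \in \Theta} \E_{\X \sim \D} \p \nabla_\theta \nabla_x f(x;\theta) \p_2 \cdot \p \theta_2 - \theta_1 \p_2,
\end{align*}
which is the claim. The only real subtlety, and the step I would be most careful about, is the norm convention: one has to read $\p \nabla_\theta \nabla_x f(x;\theta) \p_2$ as an operator norm (or at least a norm dominating it, such as Frobenius) so that the matrix-vector inequality holds; this is consistent with Definition~\ref{defn:pilp} and with how the quantity is evaluated in the subsequent 1-hidden-layer theorem. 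Everything else is a routine chain rule plus Fubini argument, so there is no serious obstacle.
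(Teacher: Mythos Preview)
Your proposal is correct and follows essentially the same route as the paper: fundamental theorem of calculus along the segment $\theta_\lambda$, push the norm inside the integral, bound the matrix-vector product, then take $\E_{\X}$ and swap with the $\lambda$-integral via Fubini. The paper labels the norm-of-product step ``Cauchy-Schwarz'' and the norm-inside-integral step ``Jensen'', whereas you invoke the operator-norm bound and the integral triangle inequality; your phrasing is in fact the more accurate one, and your remark about reading $\|\nabla_\theta\nabla_x f\|_2$ as an operator norm (or something dominating it) is exactly the point that makes the argument go through and matches how the paper subsequently bounds this quantity by the Frobenius norm.
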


\begin{proof}
The proof follows from the fundamental theorem of integral calculus. Let $g(\theta, \X) = \grad f(\X, \theta)$ for convenience.

\begin{align*}
     g(\theta_2, \X) - g(\theta_1, \X) &= \left(\int_{\lambda=0}^{1} \nabla_{\theta} g(\theta_{\lambda}, \X) d\lambda \right)^{\top} (\theta_2 - \theta_1) \\ 
     \| g(\theta_2, \X) - g(\theta_1, \X) \|_2 &\leq  \| \int_{\lambda=0}^{1}  \nabla_{\theta} g(\theta_{\lambda}, \X) d\lambda \|_2 \| \theta_2 - \theta_1 \|_2 ~~~~~ (\text{Cauchy Schwartz})\\
     &\leq \left(\int_{\lambda=0}^{1}  \| \nabla_{\theta} g(\theta_{\lambda}, \X)  \|_2 d\lambda \right) \| \theta_2 - \theta_1 \|_2 ~~~~ (\text{Jensen's inequality})\\
     \E_{\X \in \D} \| g(\theta_2, \X) - g(\theta_1, \X) \|_2 &\leq \left(\int_{\lambda=0}^{1}  \E_{\D} \| \nabla_{\theta} g(\theta_{\lambda}, \X)  \|_2 d\lambda \right) \| \theta_2 - \theta_1 \|_2 ~~~~ (\text{Swapping expectation and integral})\\
     \E_{\X \in \D} \| g(\theta_2, \X) - g(\theta_1, \X) \|_2 &\leq \E_{\Theta} \E_{\D} \| \nabla_{\theta} g(\theta, \X)  \|_2 \| \theta_2 - \theta_1 \|_2 ~~~~ (\text{defn of expectation})
\end{align*}

\end{proof}

\begin{theorem}
Assume that a 1-hidden layer neural network with weights $\theta$, and random inputs $\X \sim \mathcal{N}(0, I)$\footnote{Covariance of I is chosen for notational brevity}. Further assume that we have use an activation function $\sigma$ with well-defined second derivatives (e.g: softplus). For this case, the parameter-input derivatives have the following form:

\begin{align*}
   E_{\X} \| \nabla_{\theta} \grad \ell(\X, \theta) \|_2 \leq E_{\X} \| \nabla_{\theta} \grad \ell(\X, \theta) \|_F \leq \| \theta \|_2 + \beta \phi(\theta)
\end{align*}

where $\beta$ is an maximum curvature, and $\phi(\theta)$ is the path-norm \cite{neyshabur2015norm} of the model.

\end{theorem}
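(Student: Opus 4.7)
The plan is to compute $\nabla_\theta \grad f$ explicitly for the 1-hidden layer network $f(\X;\theta) = W_2^\top \sigma(W_1 \X)$, bound its Frobenius norm block by block (which in turn upper bounds the operator norm), and then take the expectation over $\X \sim \mathcal{N}(0, I)$ to identify the resulting expression with $\|\theta\|_2 + \beta\, \phi(\theta)$. Writing $z = W_1 \X$ so that $\grad f = W_1^\top (W_2 \odot \sigma'(z))$, the parameter Jacobian naturally decomposes into a $W_2$-block and a $W_1$-block. The $W_2$-block has entries $(W_1)_{m,k}\,\sigma'(z_m)$, while the product rule splits the $W_1$-block into two sub-blocks:
\begin{align*}
A_{(m,n),k} &= \delta_{k,n}\, W_2^m\, \sigma'(z_m), \\
B_{(m,n),k} &= W_1^{m,k}\, W_2^m\, \sigma''(z_m)\, \X_n,
\end{align*}
where $A$ comes from differentiating the outer $W_1^\top$ and $B$ from differentiating $\sigma'(W_1 \X)$ via the chain rule (this is where $\sigma''$, and hence $\beta$, enters).

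Next, I would bound the Frobenius norm of each piece using $|\sigma'|\leq 1$ and $|\sigma''|\leq \beta$. Directly summing squared entries gives $\|\nabla_{W_2}\grad f\|_F^2 \leq \|W_1\|_F^2$ and $\|A\|_F^2 \leq d\,\|W_2\|^2$ from the delta constraint. For $B$, the key observation is that its squared Frobenius norm factorizes cleanly:
\begin{align*}
\|B\|_F^2 = \|\X\|^2 \sum_{m,k} (W_2^m)^2 (\sigma''(z_m))^2 (W_1^{m,k})^2 \leq \beta^2 \|\X\|^2 \sum_{m,k}(W_2^m W_1^{m,k})^2 = \beta^2 \|\X\|^2 \phi(\theta)^2,
\end{align*}
so $\|B\|_F \leq \beta\,\|\X\|\,\phi(\theta)$. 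Combining the three pieces by the triangle inequality and applying $\|\cdot\|_2 \leq \|\cdot\|_F$ yields a pointwise-in-$\X$ bound of the form (const)$\cdot\|\theta\|_2 + \beta\,\|\X\|\,\phi(\theta)$. Taking expectation over $\X \sim \mathcal{N}(0,I)$, Jensen's inequality gives $\E_\X \|\X\| \leq \sqrt{\E_\X \|\X\|^2} = \sqrt{d}$, which, after absorbing dimension-dependent constants into the normalization of $\|\theta\|_2$ (as the paper's footnote about the choice of covariance signals), produces the stated bound $\|\theta\|_2 + \beta\,\phi(\theta)$. A final expectation over $\theta$ (on the line connecting $\theta_1$ and $\theta_2$) converts this into the form of the main statement.

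The hard part will not be any single calculation but the bookkeeping: the $A$-sub-block naturally carries a $\sqrt{d}$ factor from the identity structure and the $B$-sub-block a $\|\X\|$ factor from the chain rule, and both must be identified with $\|\theta\|_2$ and the path-norm term respectively to match the clean form in the statement. The cleanest route is to work entry-by-entry, resist the temptation to combine blocks early, and only identify $\sum_{m,k}(W_2^m W_1^{m,k})^2$ with $\phi(\theta)^2$ at the very end, where the path-norm emerges transparently from the product $W_2^m W_1^{m,k}$ in the $\sigma''$ term. A secondary subtlety is justifying $|\sigma'|\leq 1$ for softplus (which follows because $\sigma'$ is the sigmoid), ensuring the assumption on $\sigma$ is exactly what the proof uses.
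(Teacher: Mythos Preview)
Your decomposition into the $W_2$-block and the two $W_1$-sub-blocks $A,B$ is exactly the paper's, and the identification of $\sum_{m,k}(W_2^m W_1^{m,k})^2$ with $\phi(\theta)^2$ is the same endpoint. The one substantive difference is the order in which you take norms and expectations. The paper first squares each mixed-partial entry, takes $\E_\X$ \emph{entrywise} using the Gaussian moments $\E[\X_i]=0$ and $\E[\X_i^2]=1$ (this is where the assumption $\X\sim\mathcal N(0,I)$ actually does work, killing the cross term in the diagonal $W_1$-entry and replacing $\X_m^2$ by $1$ in the off-diagonal entries), sums to obtain $\E_\X\|\nabla_\theta\grad f\|_F^2 \le \|\theta\|_2^2 + \beta^2\phi(\theta)^2$, and only then applies Jensen and $\sqrt{a^2+b^2}\le a+b$. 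You instead bound $\|\cdot\|_F$ pointwise in $\X$, pick up a $\|\X\|$ on the $B$-block and a free $\sqrt d$ on the $A$-block, and take $\E_\X$ last. Both routes are valid, but yours leaves residual dimension factors that you then propose to ``absorb via the covariance footnote''; that footnote is only saying that a non-identity covariance would clutter the formulas, not that constants can be hidden there. If you swap to the paper's order---square, then $\E_\X$, then sum, then $\sqrt{\cdot}$---the Gaussian assumption does the cleanup for you and no absorption is needed. (The final expectation over $\theta$ you mention is not part of this theorem; it belongs to the preceding lemma.)
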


\begin{proof}
We derive this expression for the case of a scalar valued neural network
$f(\X) = W_2 \sigma(W_1 \X)$, where $W_2 \in \R^{h \times d}, W_1 \in \R^{1 \times h}$, where $h$ is the number of hidden layers, and $d$ is the input dimensionality.
Here, $\sigma: \R \rightarrow \R$ is a point-wise smooth non-linearity with a well-defined curvature, such as softplus, but not ReLU. For such a non-linearity, 
let $\sigma''(x) \vcentcolon= \frac{\partial^2 \sigma(x)}{\partial x^2 } \leq M$. For softplus, this $M = \beta$ hyper-parameter is implicit in its definition \cite{srinivas2022efficient}. 

By straightforward calculus, its gradient and the ``parameter-input derivatives'' are given by 

\begin{align*}
    \frac{\partial f(\X)}{\partial \X_i} &= \sum_{k=1}^{h} W_2^k \sigma'(W_1 \X)^k W_1^{i,k} \\
    \frac{\partial^2 f(\X)}{\partial \X_i \partial W_2^j} &= \sigma'(W_1 \X)^j W_1^{i,j} \\
    \frac{\partial^2 f(\X)}{\partial \X_i \partial W_1^{i,j}} &= W^j_2 \sigma'(W \X)^j + W^j_2 \sigma''(W_1 \X)^j W_1^{i,j} \X_i \\ 
    \frac{\partial^2 f(\X)}{\partial \X_i \partial W_1^{m,j}} &= W^j_2 \sigma''(W_1 \X)^j W_1^{i,j} \X_m ~~~~~(\forall m \neq i)\\ 
\end{align*}

Now, assume that $\X \sim \mathcal{N}(0, I)$, i.e., the input is an independent normal distribution. Let us now compute the squared terms for the parameter-input derivatives to eventually be able to compute its Frobenius norm.

\begin{align*}
    \left( \frac{\partial^2 f(\X)}{\partial \X_i \partial W_2^j} \right)^2 &\leq (\sigma'(W_1 \X)^j W_1^{i,j})^2 \leq (W_1^{i,j})^2 ~~~~(\sigma' \leq 1 ~ \text{for softplus})\\
    \E_{\X}\left(\frac{\partial^2 f(\X)}{\partial \X_i \partial W_1^{i,j}}\right)^2 &= \E_{\X} (W^j_2 \sigma'(W \X)^j + W^j_2 \sigma''(W_1 \X)^j W_1^{i,j} \X_i)^2 \\ 
    & \leq (W^j_2)^2 + (W^j_2 \sigma''(W_1 \X)^j W_1^{i,j})^2 ~~~~ (\text{Using} ~\X \sim \mathcal{N}(0,I))\\
    & \leq (W^j_2)^2 + \beta^2 (W^j_2  W_1^{i,j})^2 ~~~~ (\sigma'' \leq \beta)\\
    \E_{\X} \left( \frac{\partial^2 f(\X)}{\partial \X_i \partial W_1^{m,j}} \right)^2 &\leq \beta^2 (W^j_2 W_1^{i,j})^2 ~~~~~(\forall m \neq i)\\ 
\end{align*}

Computing the Frobenius norm, we have 

\begin{align*}
    \E_{\X} \| \nabla_{\theta} \grad f(\X) \|_F^2 &= \sum_{i,j,k} \E_{X} \left( \frac{\partial^2 f(\X) }{\partial \X_i \partial W_1^{j,k}} \right) + \sum_{i,j} \E_{X} \left( \frac{\partial^2 f(\X) }{\partial \X_i \partial W_2^j} \right) \\
    & \leq \left( \sum_{j,k} (W_2^k)^2 + \sum_{j=i} \beta^2 (W_2^k W_1^{j,k})^2 + \sum_{j \neq i} \beta^2 (W_2^k W_1^{j,k})^2 \right) + \sum_{i,j} (W_1^{i,j})^2 \\
    & \leq  \left( \sum_k (W_2^k)^2 + \sum_{i,j} (W_1^{i,j})^2 \right) + \beta^2 \sum_{j,k} (W_2^k W_1^{j,k})^2 \\
    & \leq \| \theta \|^2 + \beta^2 \phi^2(\theta)
\end{align*}

Here, we use the fact that $\phi(\theta) = \sqrt{\sum_{j,k} (W_2^k W_1^{j,k})^2} $ is the 2-path-norm \cite{neyshabur2015norm} for the purpose of characterizing generalization in neural networks. Finally, we re-write the above using square roots:

\begin{align*}
    \E_{\X} \| \nabla_{\theta} \grad f(\X) \|_F &\leq \sqrt{ \E_{\X}  \| \nabla_{\theta} \grad f(\X) \|_F^2 } \\
    &\leq \sqrt{\| \theta \|_2^2 + \beta^2 \phi^2(\theta)}\\
    &\leq \| \theta \|_2 + \beta \phi(\theta) 
\end{align*}
\end{proof}

\section{Additional experiments}

\paragraph{Additional data from \S 4.1}

The specific hyperparameters that we use to a) train base models, and b) fine-tune these models, are shown in \Cref{tab:fine_tuning_hyperparams}. These hyperparameters are chosen such that a minimum can be attained by the optimizer (SGD), as is required to verify the theory. The learning rates shown in the fine-tuning process for HELOC and Adult (indicated by a star in \Cref{tab:fine_tuning_hyperparams}) are multiplied by a scalar that is dependent on the amount of noise added to the dataset. Intuitively, adding more noise causes a larger increase in loss, so we increase the learning rate used during fine-tuning in order to converge to the new minimum within 50 epochs. The alternative approach, used in the WHO experiments, is to use more epochs during fine-tuning, and instead start with a constant learning rate that is higher but decays more, such that we again see convergence.

\begin{table}[t]
    \small
    \centering
    \caption{\small Details of (dataset specific) hyperparameters used in \S 4.1 fine-tuning. Base models are trained for a large enough number of epochs to ensure convergence to a minimum. Decay value indicates the amount of decay applied to the learning rate, and step size indicates the number of epochs between applying decay.}
    \begin{tabular}{r|cccccc}\toprule
        & \multicolumn{2}{c}{WHO} & \multicolumn{2}{c}{HELOC} & \multicolumn{2}{c}{Adult}\\ 
        & Base & Fine-Tune & Base & Fine-Tune & Base & Fine-Tune 
         \\\midrule
        Epochs & 4000 & 1000 & 500 & 50 & 500 & 50\\
        Learning Rate & 0.5 & 0.1 & 0.5 & 0.01* & 0.5 & 0.01*\\
        Decay Value & 0.9 & 0.9 & 0.95 & 0.95 & 0.95 & 0.95\\
        Step Size & 40 & 40 & 100 & 100 & 100 & 100
        \\\bottomrule
    \end{tabular}
    
    * these learning rates vary as described in this appendix.
    \label{tab:fine_tuning_hyperparams}
\end{table}

Accuracy data for WHO in the fine-tuning experiments is shown in \Cref{tab:acc1}. Larger weight decay values and lower model curvatures do reduce train accuracy, as these modifications make it more difficult to overfit the model in reaching an exact minimum. However, test accuracy remains very similar across techniques. For the experiments that add synthetic noise, the final accuracy after fine-tuning is consistent regardless of how much noise we add, up to a standard deviation of 0.1. The accuracy is in the mid-70\% range for HELOC and the mid-80\% range for Adult. 

\Cref{fig:fine_tune_adult} shows the effect of weight decay and curvature on the Adult dataset for the fine-tuning experiments (omitted in the main text). In the case of curvature, the trend towards higher gradient stability given less curvature is not very pronounced, which we attribute to the difficulty associated with training a model to an exact minimum. In the presence of larger noise, the closest minimum in the loss landscape during fine-tuning may also shift to a different minimum.

\begin{table}[t]
    \small
    \centering
    \caption{\small Accuracy for fine-tuning in the WHO dataset. Confidence intervals are the middle 50\% of values. For softplus, $\gamma=0.001$.}
    \begin{tabular}{cccccccccc}\toprule
        \multicolumn{2}{c}{ReLU / $\gamma$=0} & \multicolumn{2}{c}{ReLU / $\gamma$=0.001} & \multicolumn{2}{c}{ReLU / $\gamma$=0.01} & \multicolumn{2}{c}{SP / $\beta=10$} & \multicolumn{2}{c}{SP / $\beta=5$} \\ 
        Train & Test & Train & Test & Train & Test & Train & Test & Train & Test 
         \\\midrule
        % % Adult (orig.) & & & & & & & & & & \\
        % Adult (FT) & & & & & & & & & & \\
        % % HELOC (orig.) & & & & & & & & & & \\
        % HELOC (FT) & & & & &  & & & & & \\
        % % WHO (orig.) & & & &  & & & & & & \\
        99.1 \textpm 0.1 & 92.8\textpm 0.5  & 98.9 \textpm 0.2 & 92.9 \textpm 0.4 & 96.5\textpm 0.2 & 92.7 \textpm 0.3 & 97.2 \textpm 0.2 & 93.2 \textpm  0.2 & 95.3\textpm 0.1 & 91.8\textpm 0.3
        \\\bottomrule
    \end{tabular}
    \label{tab:acc1}
\end{table}

\begin{figure}
    \centering
    \includegraphics[width=0.48\textwidth]{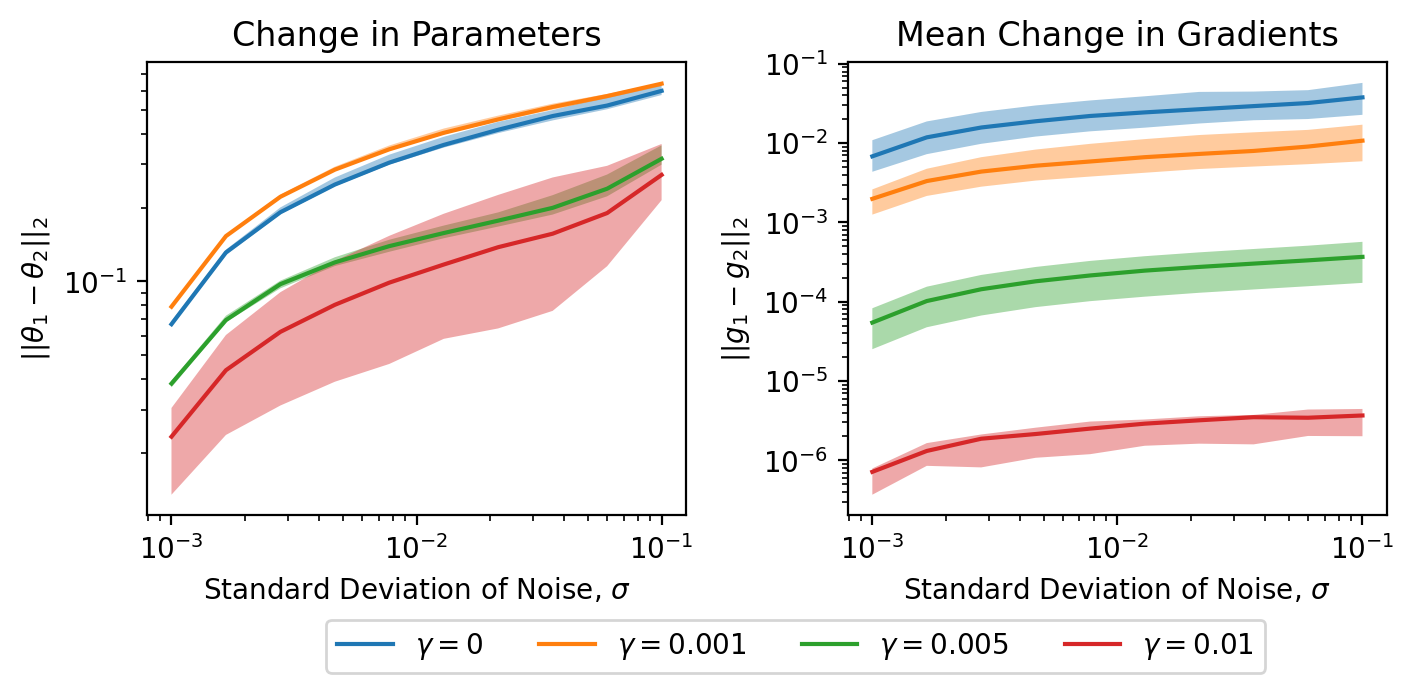}
    \includegraphics[width=0.48\textwidth]{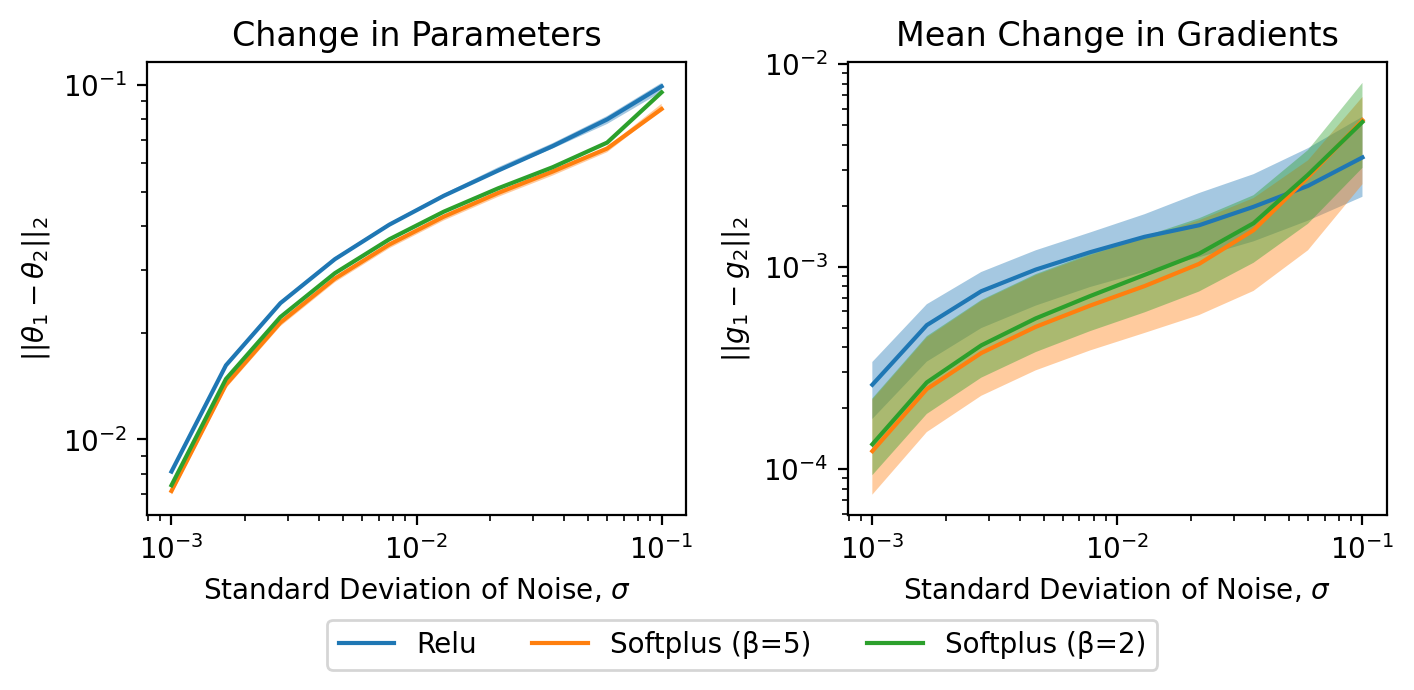}
    \caption{\small Effect of weight decay value (left) and model curvature (right) on gradient and parameter stability for the Adult dataset. All models use ReLU activation. Shifted models were trained via fine-tuning the original model. The x-axis is the size of the data shift, represented by the standard deviation $\sigma$ of Gaussian $\mathcal{N}(0,\sigma^2)$ noise. }
    \label{fig:fine_tune_adult}
\end{figure}

\paragraph{Additional data from \S 4.2}

For a given dataset, the hyperparameters for the retraining experiments are fixed between base models and retrained models in this section. In the HELOC and Adult datasets, we train any particular model for 30 epochs with a learning rate of 0.2, and no decay. For the WHO dataset, we use 80 epochs, an initial learning rate 0.8, with decay value 0.8 and step size 10 (i.e. the learning rate is multiplied by 0.8 every 10 epochs).

Accuracy data for the retraining experiments is in \Cref{tab:acc2} for WHO, and \Cref{fig:acc2helocadult} for HELOC and Adult. In the latter, the base model accuracy is essentially identical to the lower extreme on the x-axis (when very little/no noise has been added to the training data). In all datasets, while there exists some disparity between training and test data, this is much smaller than in the fine-tuning experiments. Importantly, test accuracies remain approximately constant across model types (e.g. lower or higher weight decay or curvature). We see a slight drop in accuracy for the Adult dataset using the softplus model with $\beta=2$. Despite how similar the test accuracies of all shifted models are, we observe in the main text and in the following figures that gradient stability across test inputs can still vary greatly, independent of test accuracy.

The left and right sections of \Cref{fig:retraining_gamma_curvature_params} show how parameters change given different weight decays and curvatures, respectively, as the data shift grows. This illustrates the same trends that we see in the main text, namely, that increasing weight decay or reducing curvature results in a smaller change in parameters, while adding more noise to the data results in a larger change.

\begin{figure}[t]
    \centering
    \includegraphics[width=0.95\textwidth]{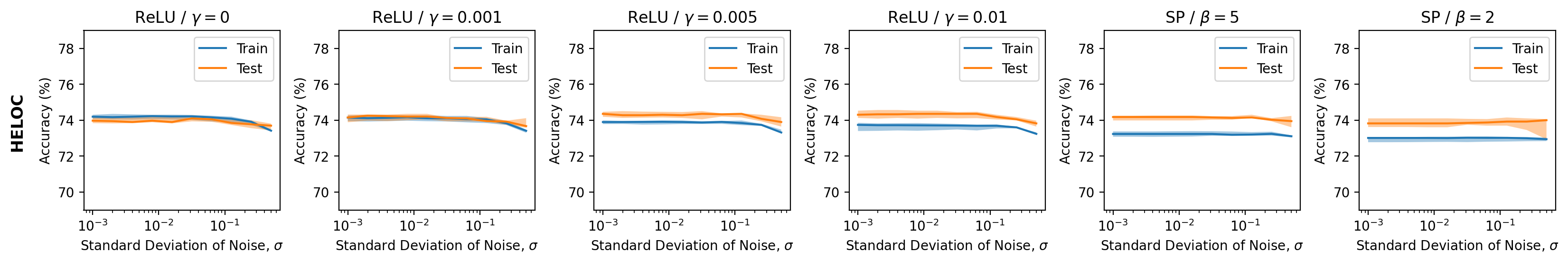}
    \includegraphics[width=0.95\textwidth]{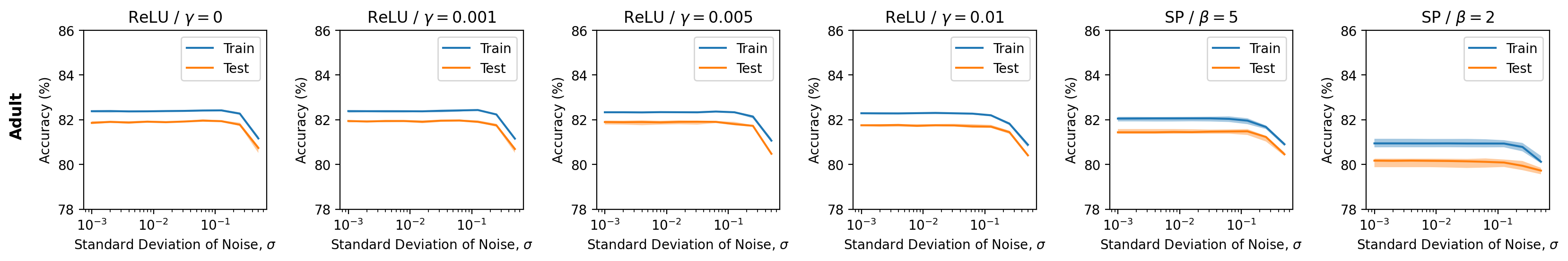}
    \caption{\small Accuracy for retraining (HELOC and Adult datasets). Confidence intervals are the middle 50\% of values.}
    \label{fig:acc2helocadult}
\end{figure}

\begin{table}[b]
    \small
    \centering
    \caption{\small Accuracy for of base models (upper row) and shifted models (lower row) in retraining experiments. Confidence intervals are the middle 50\% of values. For softplus, $\gamma=0.001$.}
    \begin{tabular}{l|cccccccccc}\toprule
        & \multicolumn{2}{c}{ReLU / $\gamma$=0} & \multicolumn{2}{c}{ReLU / $\gamma$=0.001} & \multicolumn{2}{c}{ReLU / $\gamma$=0.01} & \multicolumn{2}{c}{SP / $\beta=10$ large$^*$} & \multicolumn{2}{c}{SP / $\beta=5$} \\ 
        & Train & Test & Train & Test & Train & Test & Train & Test & Train & Test 
         \\\midrule
        % Adult (orig.) & & & & & & & & & & \\
        % Adult (shift) & & & & & & & & & & \\
        % HELOC (orig.) & 74.0 \textpm 0.1 & & & & & & & & & \\
        % HELOC (shift) & & & & &  & & & & & \\
        Base & 98.3 \textpm 0.2 & 93.3 \textpm 0.2 & 97.7\textpm 0.2 & 93.9\textpm 0.1 & 94.7\textpm 0.0 & 93.4\textpm 0.4 & 95.8 \textpm 0.0 & 92.9 \textpm 0.4 & 94.5\textpm 0.0 & 91.5\textpm 0.2 \\
        Shifted & 98.0 \textpm 0.2 & 93.7 \textpm 0.4 & 97.4\textpm 0.2 & 93.7\textpm 0.4 & 94.7\textpm 0.2 & 92.5\textpm 0.3 & 95.7 \textpm 0.0 & 93.2\textpm 0.2  & 94.7 \textpm 0.1 & 92.6 \textpm 0.2
        \\\bottomrule
    \end{tabular}
    \label{tab:acc2}
\end{table}

\begin{figure}[t]
    \centering
    \includegraphics[width=0.48\textwidth]{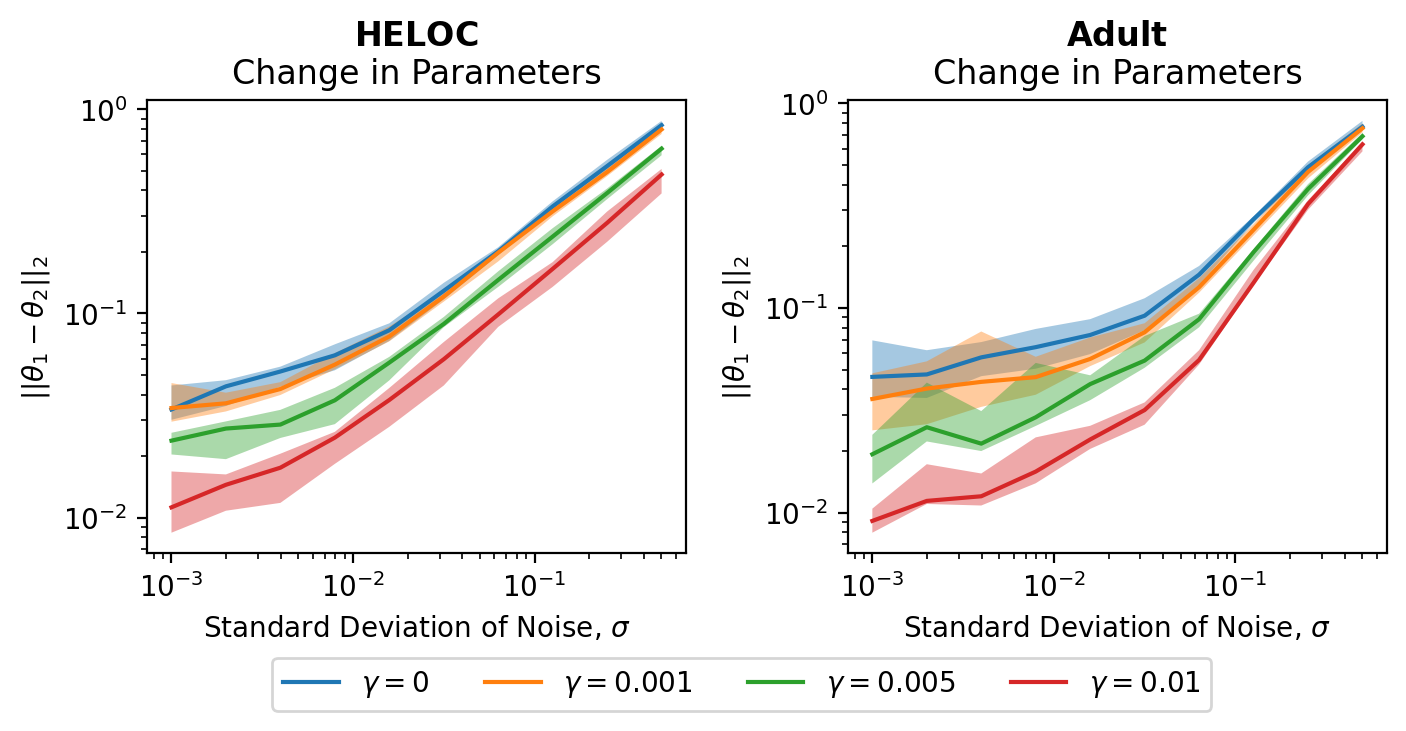}
    \includegraphics[width=0.48\textwidth]{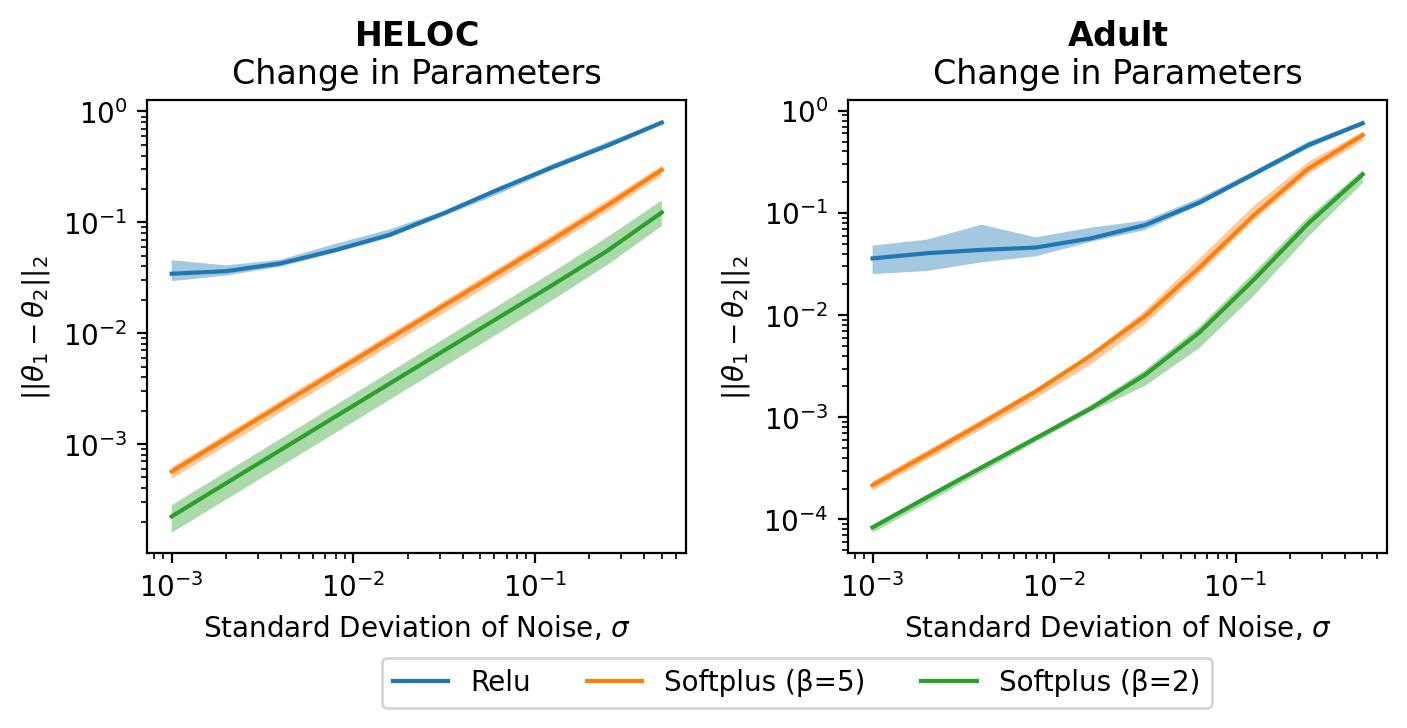}
    \caption{\small Effect of weight decay (left) and model curvature (right) on parameter stability when retraining on the HELOC and Adult datasets. The x-axis is the size of the data shift, represented by the standard deviation $\sigma$ of $\mathcal{N}(0,1)$ noise.}
    \label{fig:retraining_gamma_curvature_params}
\end{figure}

Figures~\ref{fig:explanations_retraining_full} and~\ref{fig:explanations_retraining_full2} show the Top-5 consistency scores of HELOC and Adult, respectively, each for Saliency and SmoothGrad techniques. We show the same for LIME and K.SHAP in \Cref{fig:explanations_retraining_full_heloc} for HELOC and \Cref{fig:explanations_retraining_full_adult} for Adult. Overall, SmoothGrad demonstrates consistently strong performance with respect to explanation stability, while LIME and K.SHAP show poorer performance with higher variability. For all explanation techniques, we see the same consistent trends with regards to weight decay and curvature, though the effect of each of these is occasionally less distinct in the cases of LIME and K.SHAP (i.e. when the explanation technique is inherently less stable, which we attribute mostly to the number of samples used in LIME, the countermeasures we propose are sometimes less effective).

\begin{figure*}[t]
    \centering
    \includegraphics[width=0.95\textwidth]{figures/retraining_top5_SA_heloc.png}
    \includegraphics[width=0.95\textwidth]{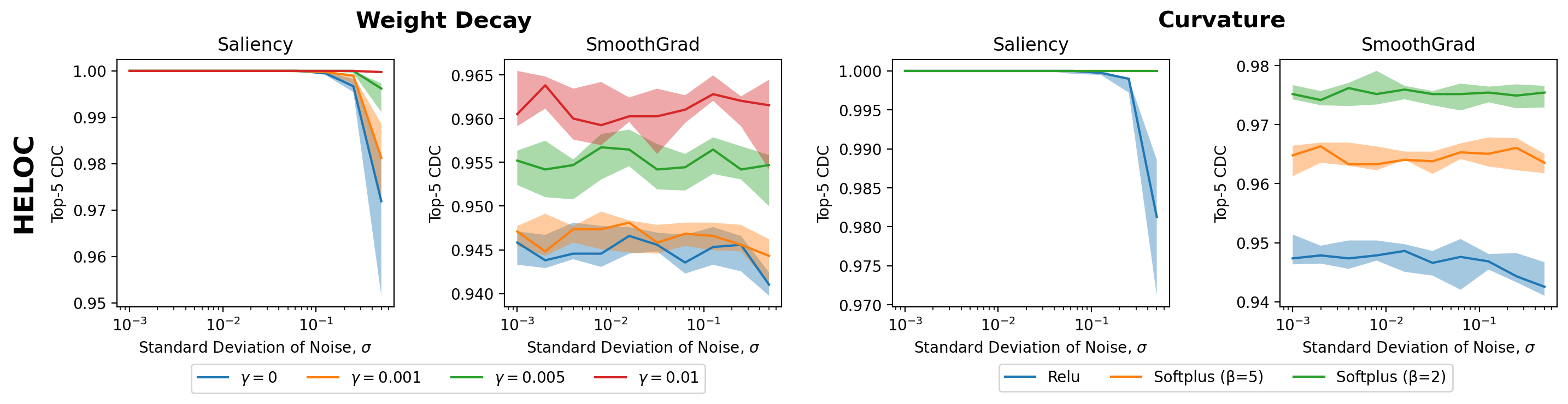}
    \includegraphics[width=0.95\textwidth]{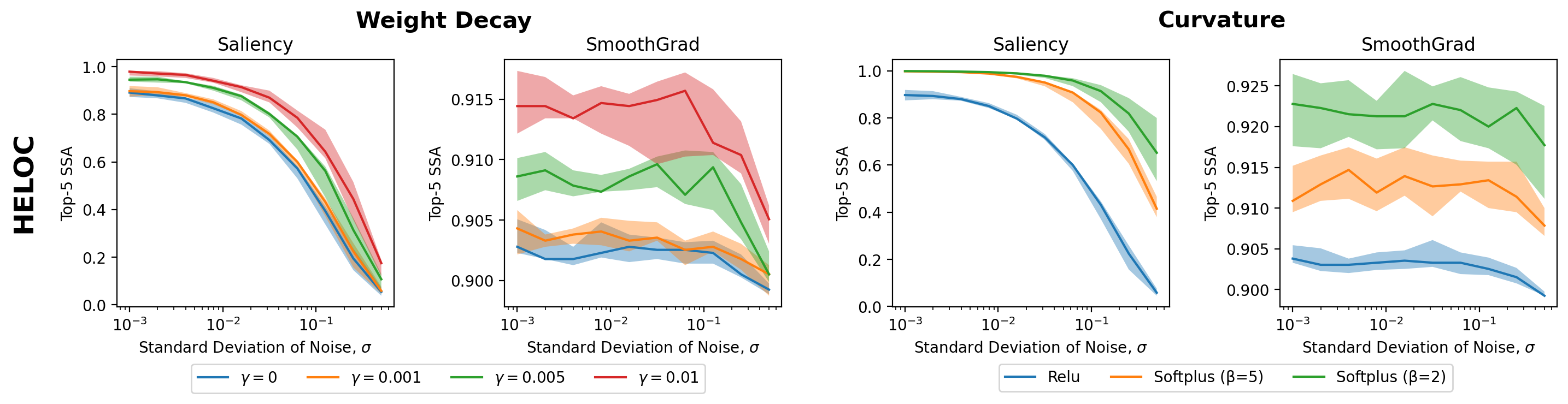}
    \caption{\small Top-5 consistency. From the top, HELOC SA, HELOC CDC, HELOC SSA (the graph for HELOC SA appears in the main text). Each row shows the effects of weight decay and curvature as data shift grows for salience and SmoothGrad top-5 metrics. Confidence intervals represent the middle 50\% of values.}
    \label{fig:explanations_retraining_full}
\end{figure*}

\begin{figure*}[t]
    \centering
    \includegraphics[width=0.95\textwidth]{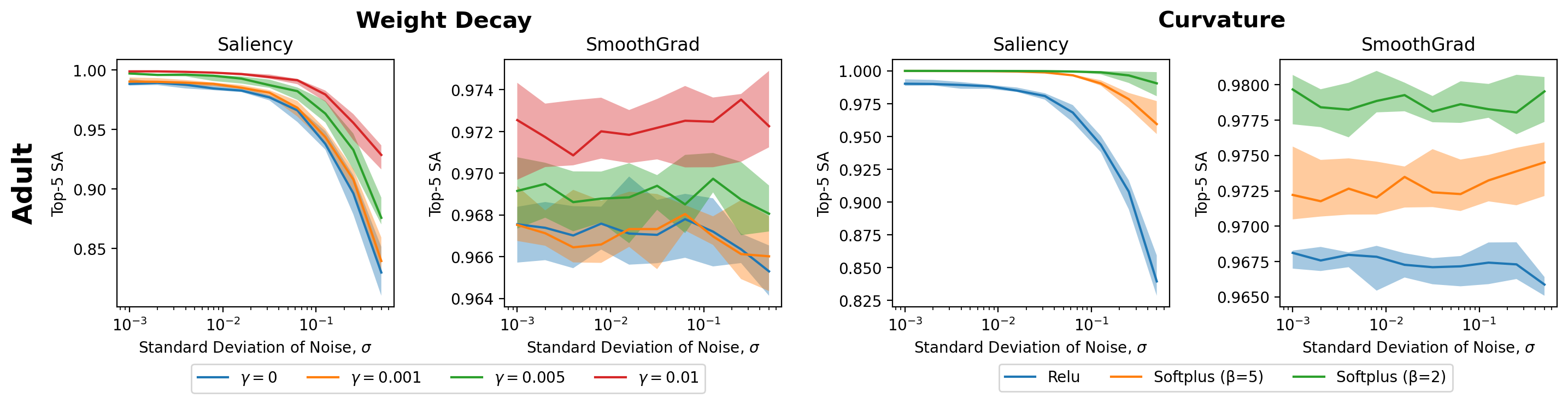}
    \includegraphics[width=0.95\textwidth]{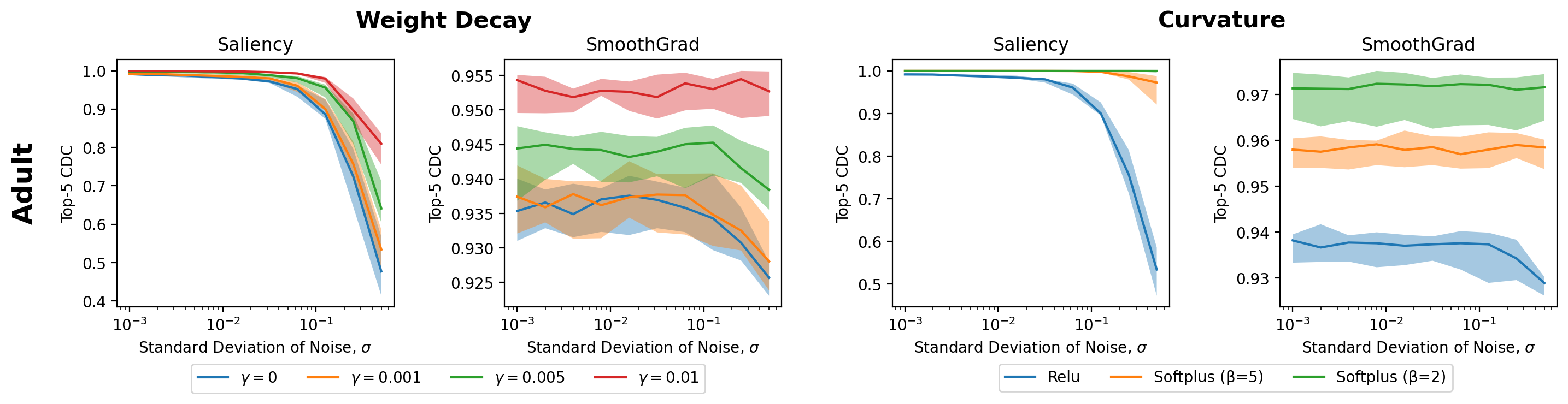}
    \includegraphics[width=0.95\textwidth]{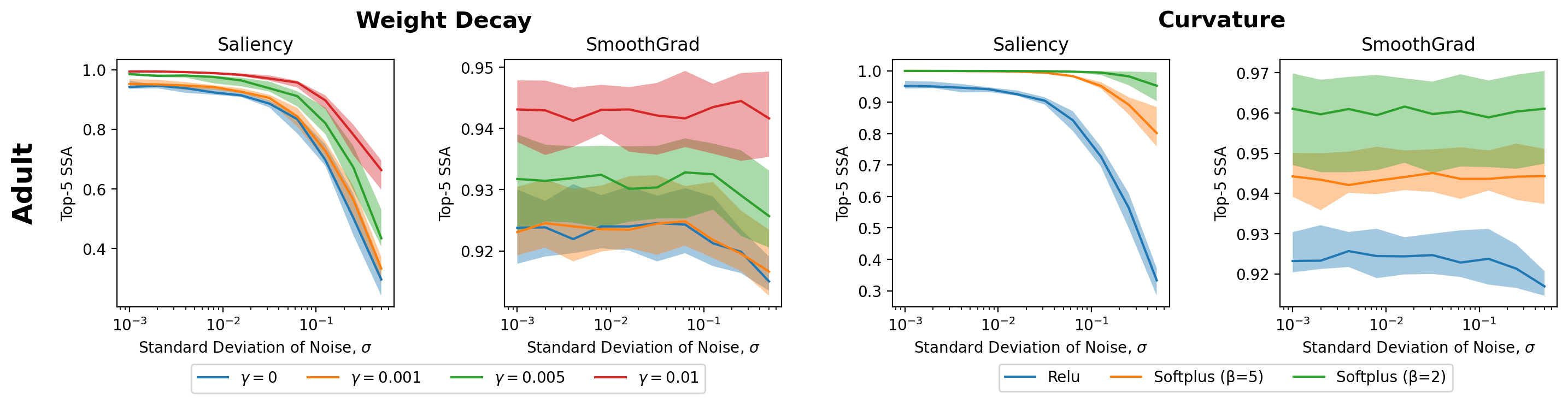}
    \caption{\small Top-5 consistency. From the top, Adult SA, Adult CDC, Adult SSA. Each row shows the effects of weight decay and curvature as data shift grows for salience and SmoothGrad top-5 metrics. Confidence intervals represent the middle 50\% of values.}
    \label{fig:explanations_retraining_full2}
\end{figure*}

\begin{figure*}[t]
    \centering
    \includegraphics[width=0.95\textwidth]{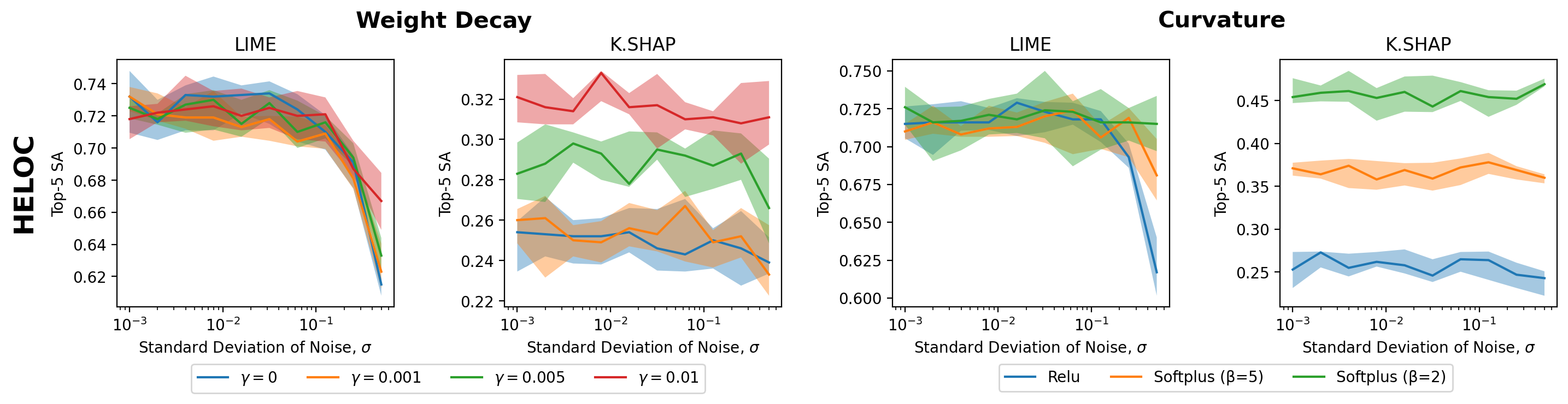}
    \includegraphics[width=0.95\textwidth]{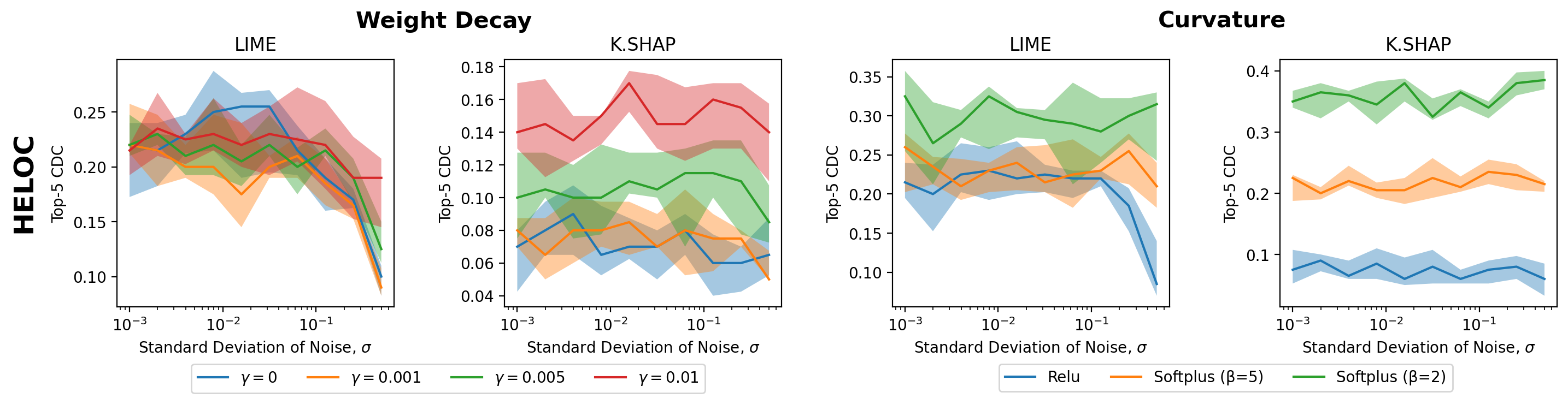}
    \includegraphics[width=0.95\textwidth]{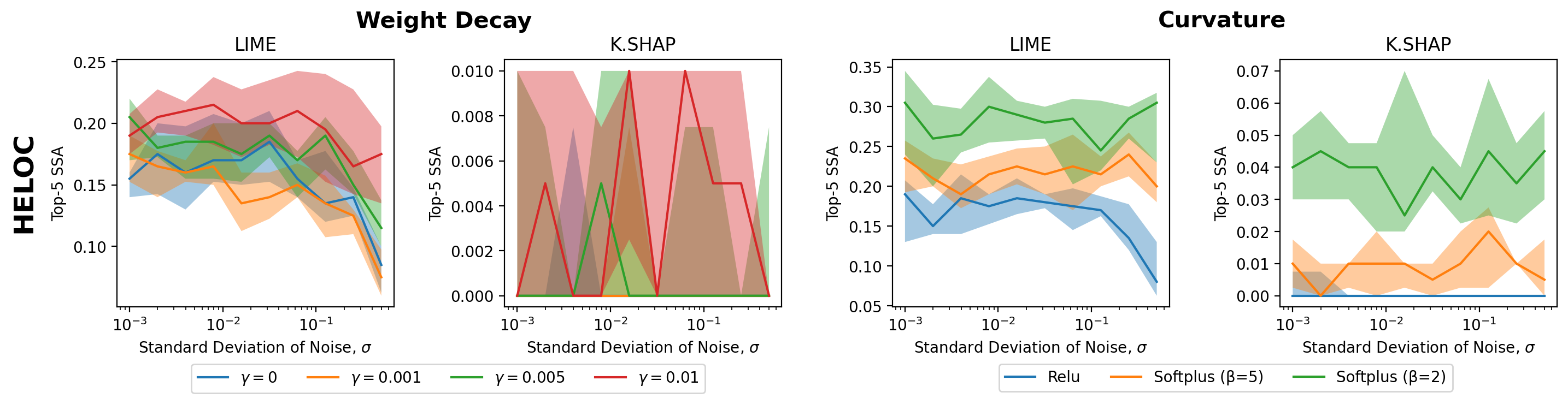}
    \caption{ \small Top-5 consistency. From the top, HELOC SA, HELOC CDC, HELOC SSA. Each row shows the effects of weight decay and curvature as data shift grows for LIME and K.SHAP top-5 metrics. Confidence intervals represent the middle 50\% of values.}
    \label{fig:explanations_retraining_full_heloc}
\end{figure*}

\begin{figure*}[t]
    \centering
    \includegraphics[width=0.95\textwidth]{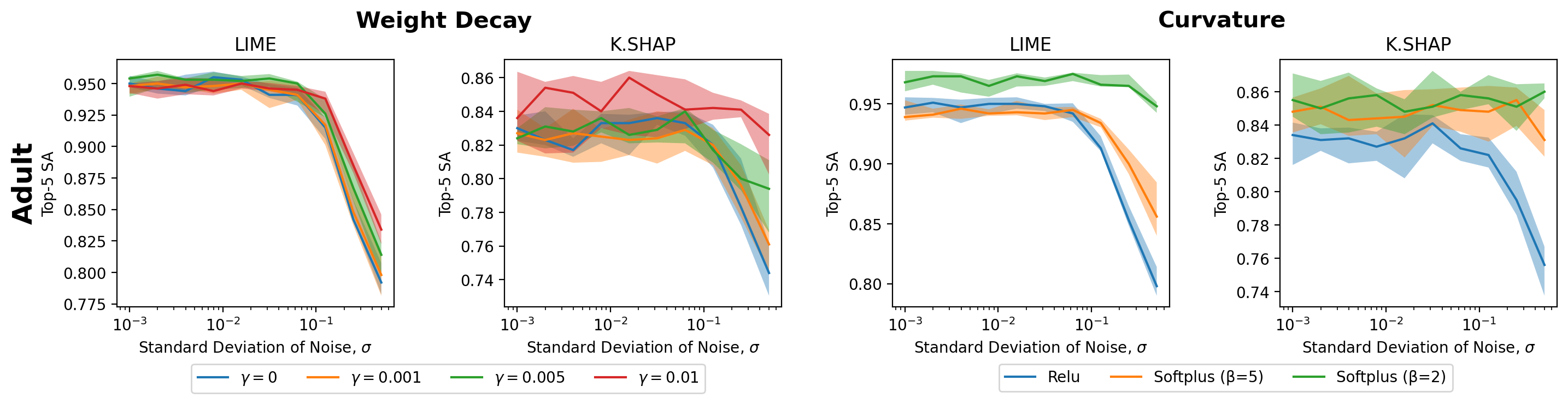}
    \includegraphics[width=0.95\textwidth]{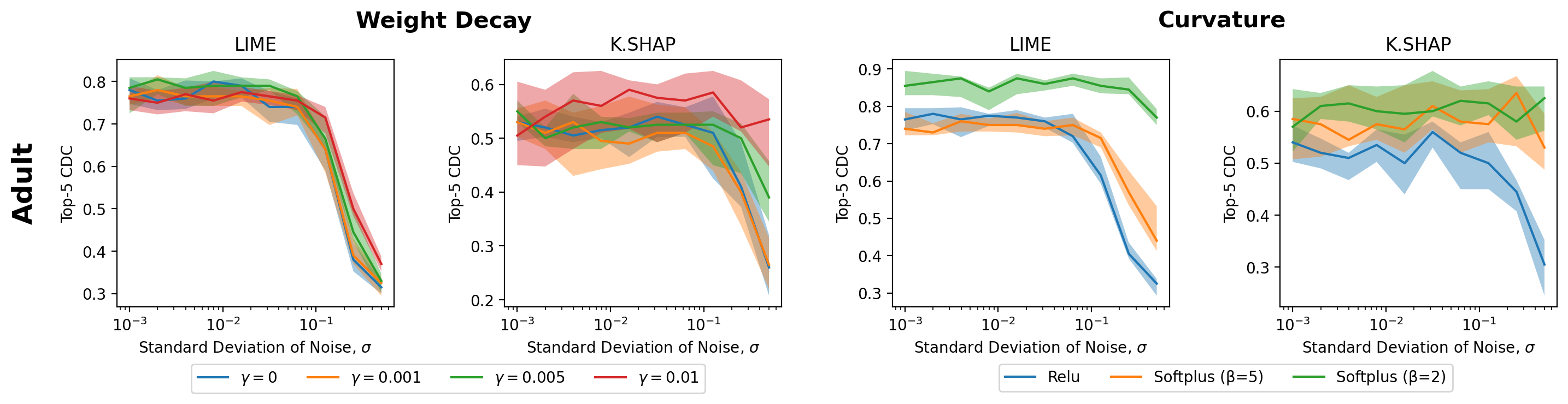}
    \includegraphics[width=0.95\textwidth]{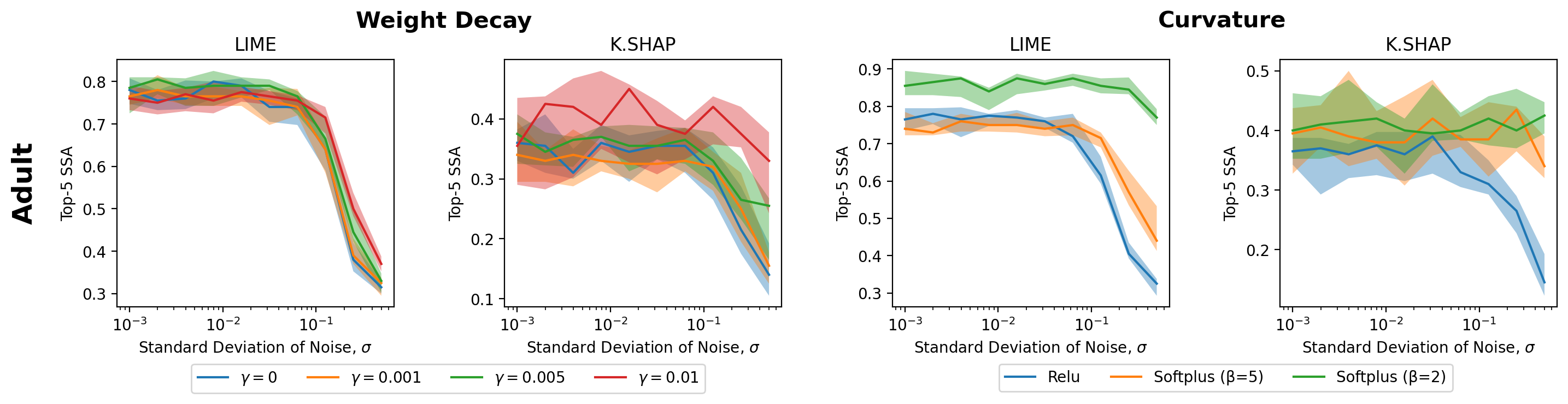}
    \caption{ \small Top-5 consistency. From the top, Adult SA, Adult CDC, Adult SSA. Each row shows the effects of weight decay and curvature as data shift grows for LIME and K.SHAP top-5 metrics. Confidence intervals represent the middle 50\% of values.}
    \label{fig:explanations_retraining_full_adult}
\end{figure*}

\clearpage

\Cref{tab:who_explanations} shows the explanation stability scores for various metrics and explanation techniques for the WHO dataset.  We make the following observations: first, for salience and SHAP (and to a lesser extent, LIME), using a weight decay of 0.01 is much more stable than using smaller weight decays. SmoothGrad is already very stable, so increasing the weight decay has no effect. Second, lowering the model curvature greatly increases the stability of CDC and SSA across explanation techniques, but has less effect on SA. This trend suggests that increasing the model curvature has limited impact on what features are in the top-K (i.e., what SA measures), but has a large impact on keeping the same sign for influential features (even though they might not be in the top K), which is what CDC measures.

\begin{table}[t]
    \small
    \centering
    \caption{\small Explanation Stability for WHO}
\begin{tabular}{ll|rrrrr}
    &            & ReLU ($\gamma=0$)               & ReLU ($\gamma$=0.001)           & ReLU ($\gamma$=0.01)            & SP ($\beta=10$)                 & SP ($\beta=5$)                  \\\toprule
SA  & Salience   & 0.63\textpm 0.01 & 0.64\textpm 0.01 & 0.73\textpm 0.01 & 0.72\textpm 0.02 & 0.77\textpm 0.02 \\
    & SmoothGrad & 0.94\textpm 0.00 & 0.94\textpm 0.00 & 0.95\textpm 0.00 & 0.95\textpm 0.00 & 0.95\textpm 0.00 \\
    & LIME       & 0.59\textpm 0.02 & 0.59\textpm 0.03 & 0.70\textpm 0.03 & 0.65\textpm 0.03 & 0.69\textpm 0.02 \\
    & SHAP       & 0.52\textpm 0.01 & 0.56\textpm 0.02 & 0.67\textpm 0.03 & 0.61\textpm 0.02 & 0.65\textpm 0.02 \\\midrule
CDC & Salience   & 0.80\textpm 0.04 & 0.83\textpm 0.04 & 0.95\textpm 0.01 & 0.94\textpm 0.02 & 0.97\textpm 0.01 \\
    & SmoothGrad & 0.94\textpm 0.00 & 0.94\textpm 0.01 & 0.95\textpm 0.00 & 0.95\textpm 0.00 & 0.96\textpm 0.00 \\
    & LIME       & 0.52\textpm 0.03 & 0.52\textpm 0.05 & 0.57\textpm 0.05 & 0.58\textpm 0.05 & 0.65\textpm 0.04 \\
    & SHAP       & 0.62\textpm 0.02 & 0.70\textpm 0.04 & 0.87\textpm 0.03 & 0.77\textpm 0.03 & 0.86\textpm 0.04 \\\midrule
SSA & Salience   & 0.19\textpm 0.04 & 0.18\textpm 0.03 & 0.30\textpm 0.02 & 0.29\textpm 0.01 & 0.39\textpm 0.03 \\
    & SmoothGrad & 0.91\textpm 0.00 & 0.91\textpm 0.00 & 0.91\textpm 0.00 & 0.91\textpm 0.00 & 0.92\textpm 0.00 \\
    & LIME       & 0.27\textpm 0.02 & 0.25\textpm 0.02 & 0.32\textpm 0.03 & 0.30\textpm 0.03 & 0.35\textpm 0.03 \\
    & SHAP       & 0.19\textpm 0.04 & 0.21\textpm 0.01 & 0.29\textpm 0.04 & 0.23\textpm 0.03 & 0.28\textpm 0.03 \\ \bottomrule
\end{tabular}
\label{tab:who_explanations}
\end{table}

\begin{figure}[t]
    \centering
    \includegraphics[width=\textwidth]{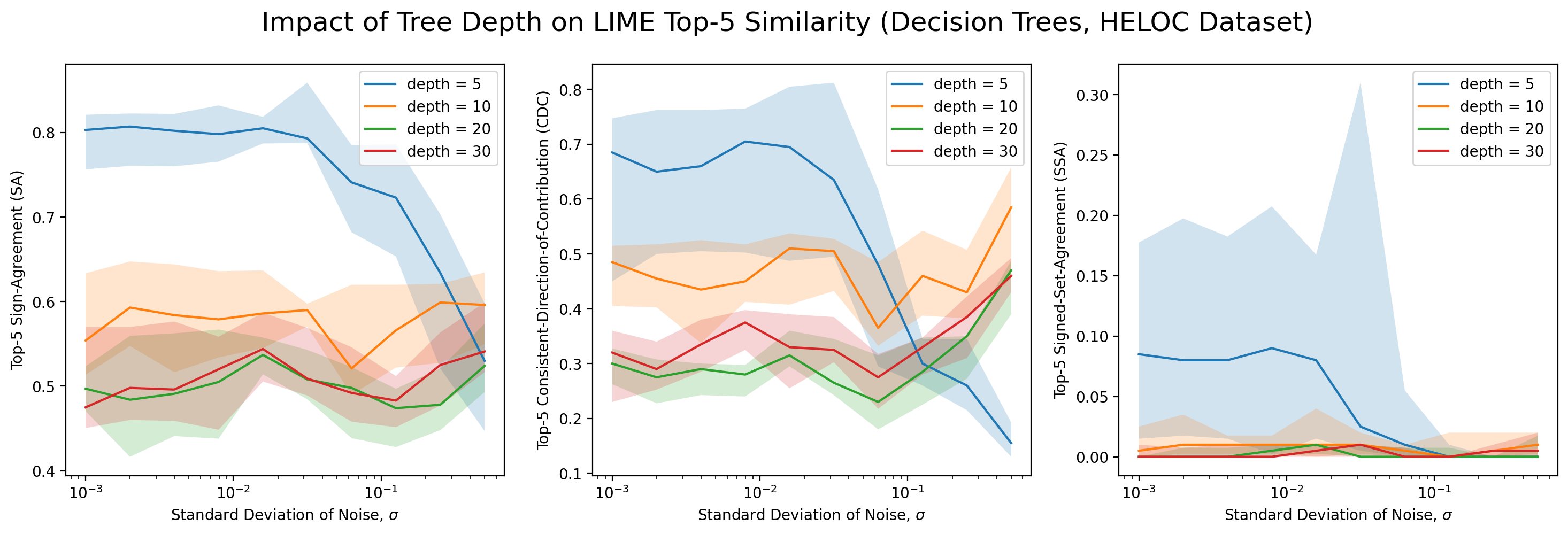}
    \caption{\small Various decision tree depths for the HELOC dataset. Similarity of LIME explanations between base models (original dataset) and retrained models (averaged across 10 random noise seeds for each value of $\sigma$). Left to right show Top-5 SA, CDC, and SSA metrics.}
    \label{fig:dt_heloc_lime}
\end{figure}

\paragraph{Additional experiments with tree-based models} Our theoretical results primarily focus on differentiable models, which encompass popular classes like neural networks, though the trends we have identified may not be universally applicable to all model classes. That said, it is worth noting that prior works analyzing the behavior and stability of ML models and explanation methods also typically rely on linear or differentiable models \cite{dominguez2022causal, pawelczyk2022exploring}. We show that differentiable models with softplus activation functions and higher weight decay values, both of which can be viewed as a way to limit the functional complexity of the model, tend to promote explanation stability. Thus, we posit that explanations of tree-based models could similarly be made more robust by imposing analogous limits on their complexity.

We empirically evaluate explanation stability when retraining decision trees, random forests, and XGBoost models, controlling for tree depth in the first two cases and the $\lambda$ regularization term in XGBoost. Higher $\lambda$ values indicate greater regularization and lower complexity. We evaluated the stability of post-hoc explanations on the HELOC dataset with synthetic noise $\sim\mathcal{N}(0, \sigma^2)$, conducting empirical evaluations using LIME and SHAP. Please refer to \S\ref{sec:eval} for detailed information on the setup of synthetic noise experiments. 
% Note also that additional work is needed to formalize how stable explanations are to retraining for these and other classes of models. % anna - commented out this sentence, it didn't parse well for me. let's mention "this is just preliminary" at the end
Interestingly, our findings show mixed results. While larger values of $\lambda$ (corresponding to higher regularization constants) or smaller tree depths tend to yield more robust explanations in some cases, the outcomes vary for different explanation methods and dataset shifts.

Figures~\ref{fig:dt_heloc_lime} and ~\ref{fig:dt_heloc_shap} demonstrate the impact of decision tree depth on LIME and SHAP, respectively, for Top-5 SA, CDC, and SSA metrics. Decision trees with lower depth exhibit improved explanation stability for smaller noise values, though the effects can break down as the dataset shifts by larger amounts. While these results support our findings for neural networks to a moderate degree, further research is required to investigate the behaviour that occurs for larger noise values (recall that our theory applies to small dataset shifts).

Figures~\ref{fig:rf_heloc_lime} and ~\ref{fig:rf_heloc_shap} demonstrate the impact of random forest tree depth on LIME and SHAP, respectively, for Top-5 SA, CDC, and SSA metrics. Observe how the explanation stability of random forests is in general much higher than for decision trees-- this would support the idea that the ensemble approach of random forests, which reduces functional diversity and complexity through averaging across multiple decision trees, indeed promotes explanation stability. However, the relative effects of depth are mixed. For LIME, at depth 5, explanations are relatively less stable, though for higher depths, explanations retained strong similarity until sufficiently large synthetic shifts were added ($\sigma>0.1$). For SHAP, we observe the trends that we expect (higher depth reduces stability), though in the case of CDC this was reversed, albeit at very high stability values. Again, further research is required to investigate the behaviour that occurs for larger shifts.

Figures~\ref{fig:xgb_heloc_lime} and ~\ref{fig:xgb_heloc_shap} demonstrate the impact of the XGBoost $\ell_2$ regularization term $\lambda$ on LIME and SHAP, respectively, for Top-5 SA, CDC, and SSA metrics. For LIME, the outcomes appear to be quite volatile, showing a relatively unpredictable range of optimal $\lambda$ values for explanation stability. Interestingly, the added synthetic noise didn't significantly destabilize the explanations, and in some instances, it surprisingly seemed to bolster stability. This counter-intuitive finding underscores the need for further investigation to understand the underlying phenomena driving these trends (though LIME itself may be the significant driving factor). In contrast, for SHAP, the results are much clearer, revealing that higher regularization indeed promotes more stable explanations. Nonetheless, as anticipated, the stability decreases as the dataset undergoes more significant shifts. Notably, the performance of SHAP appeared to be superior to LIME.

% don't split the final paragraph onto multiple pages, it'll get lost in the images
\raggedbottom
\interlinepenalty=10000
We want to emphasize that these results are preliminary - more work is needed to generalize these results to larger experiments, to extend these results to real-world (not synthetic) data shifts, and to develop theory for when explanations are be more stable for tree-based models. Notably, we did not perform experiments with the WHO dataset (our real-world shift example) because the results of performing a single experiment were too noisy (with synthetic shifts, the random initialization of the noise can be used to average over multiple trials).%, but the random seeds used for tree-based models did not meaningfully change the results.

\begin{figure}[t]
    \centering
    \includegraphics[width=\textwidth]{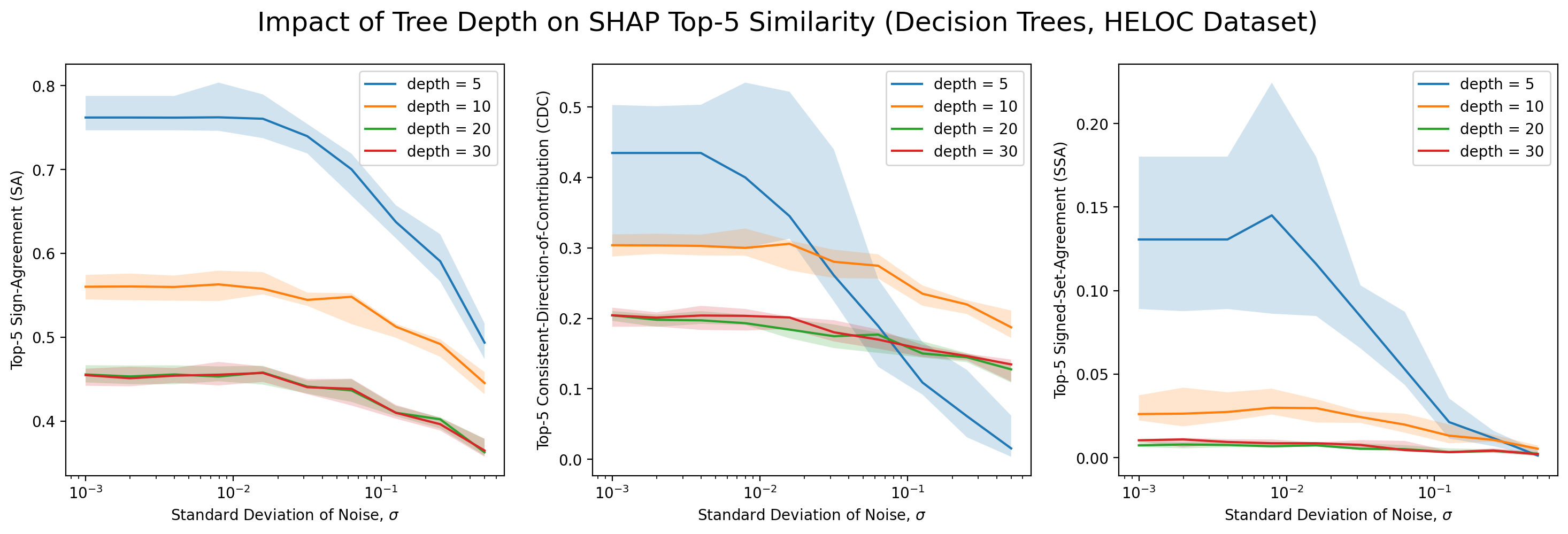}
    \caption{\small Various decision tree depths for the HELOC dataset. Similarity of SHAP explanations between base models (original dataset) and retrained models (averaged across 10 random noise seeds for each value of $\sigma$). Left to right show Top-5 SA, CDC, and SSA metrics.}
    \label{fig:dt_heloc_shap}
\end{figure}

\begin{figure}[t]
    \centering
    \includegraphics[width=\textwidth]{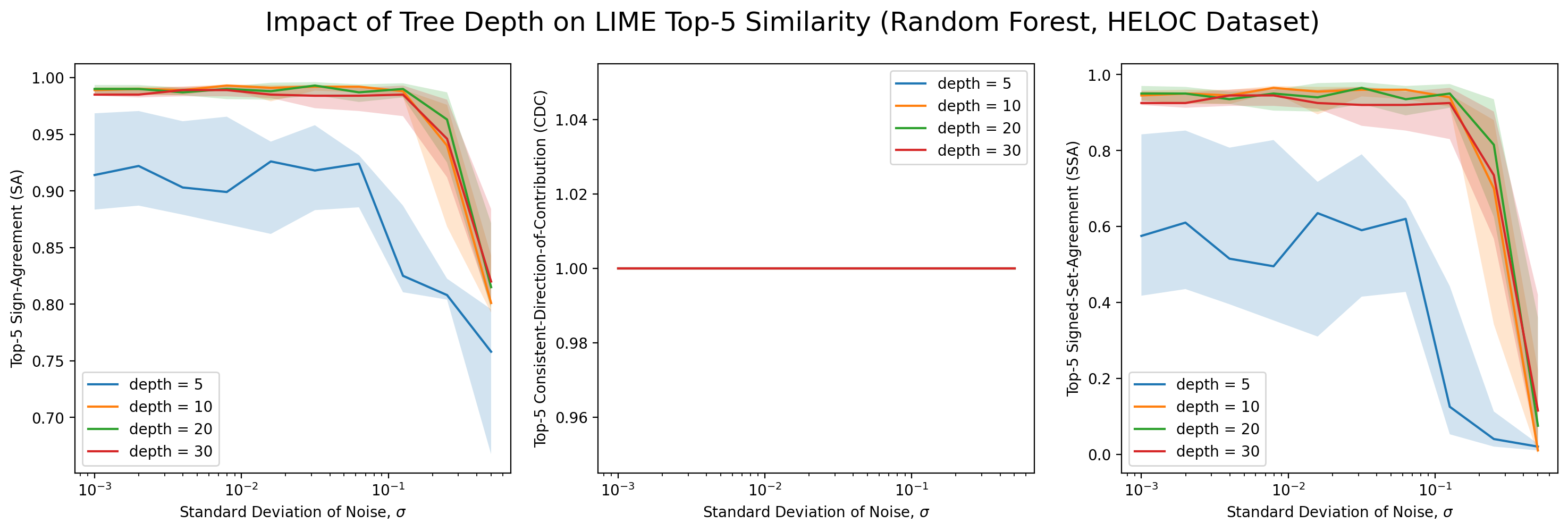}
    \caption{\small Various random forest depths for the HELOC dataset. Similarity of LIME explanations between base models (original dataset) and retrained models (averaged across 10 random noise seeds for each value of $\sigma$). Left to right show Top-5 SA, CDC, and SSA metrics.}
    \label{fig:rf_heloc_lime}
\end{figure}

\begin{figure}[t]
    \centering
    \includegraphics[width=\textwidth]{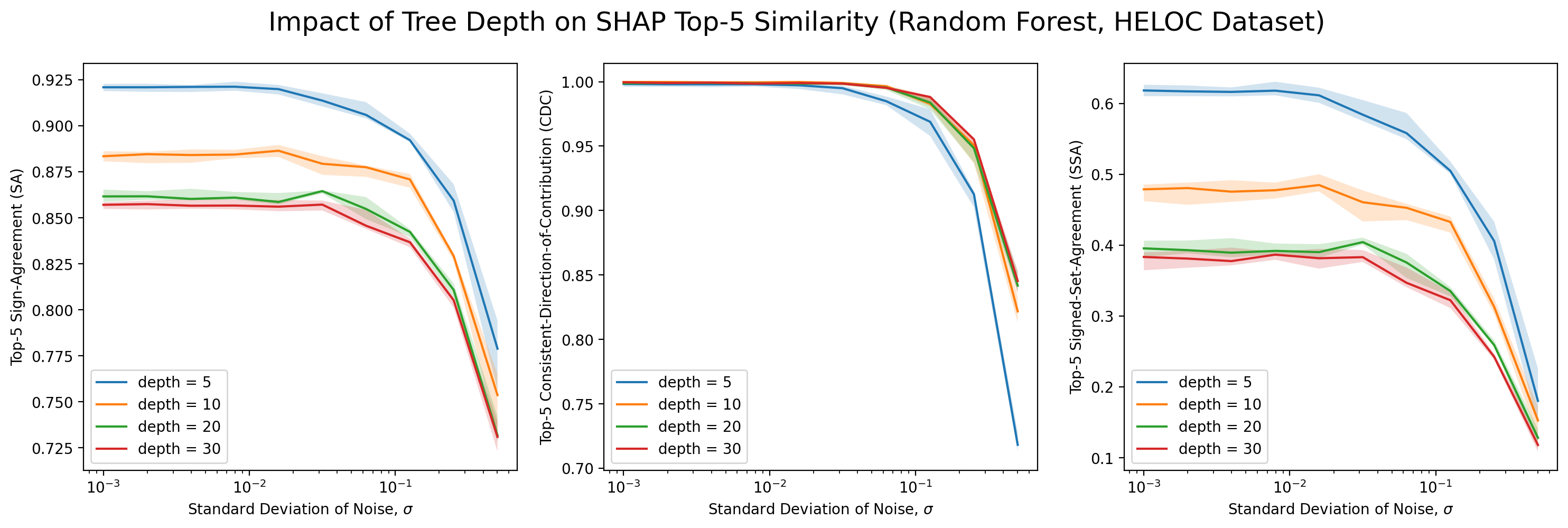}
    \caption{\small Various random forest depths for the HELOC dataset. Similarity of SHAP explanations between base models (original dataset) and retrained models (averaged across 10 random noise seeds for each value of $\sigma$). Left to right show Top-5 SA, CDC, and SSA metrics.}
    \label{fig:rf_heloc_shap}
\end{figure}

\begin{figure}[t]
    \centering
    \includegraphics[width=\textwidth]{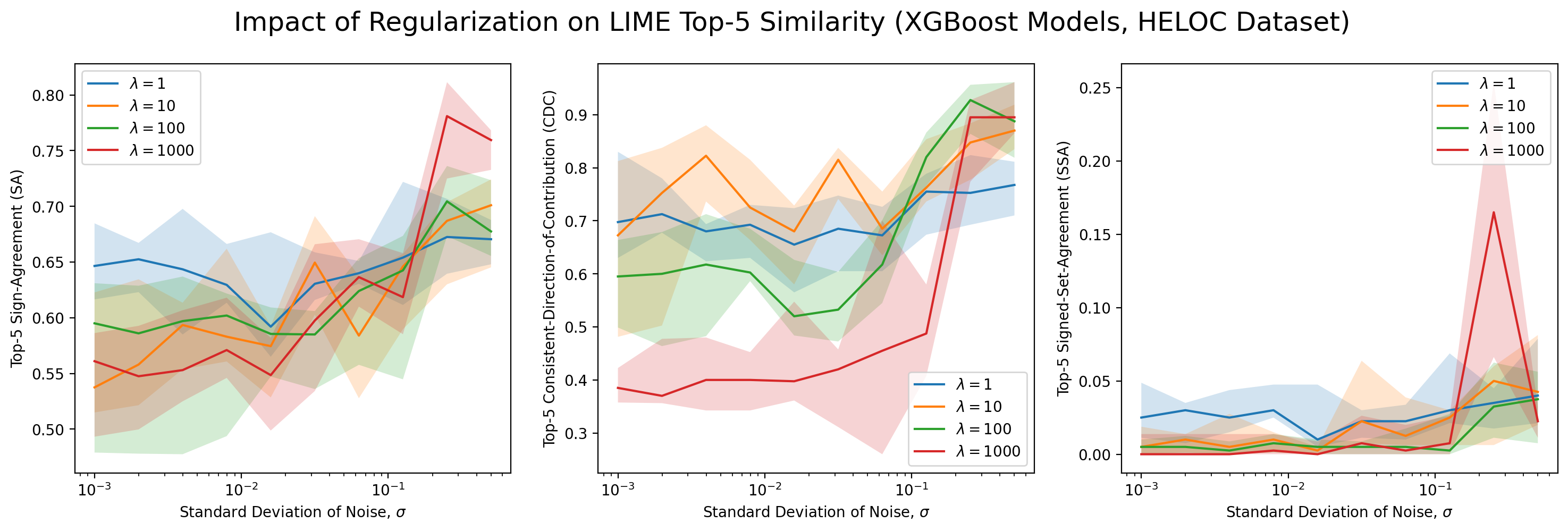}
    \caption{\small Various XGBoost $\lambda$ values for the HELOC dataset. Similarity of LIME explanations between base models (original dataset) and retrained models (averaged across 10 random noise seeds for each value of $\sigma$). Left to right show Top-5 SA, CDC, and SSA metrics.}
    \label{fig:xgb_heloc_lime}
\end{figure}

\begin{figure}[t]
    \centering
    \includegraphics[width=\textwidth]{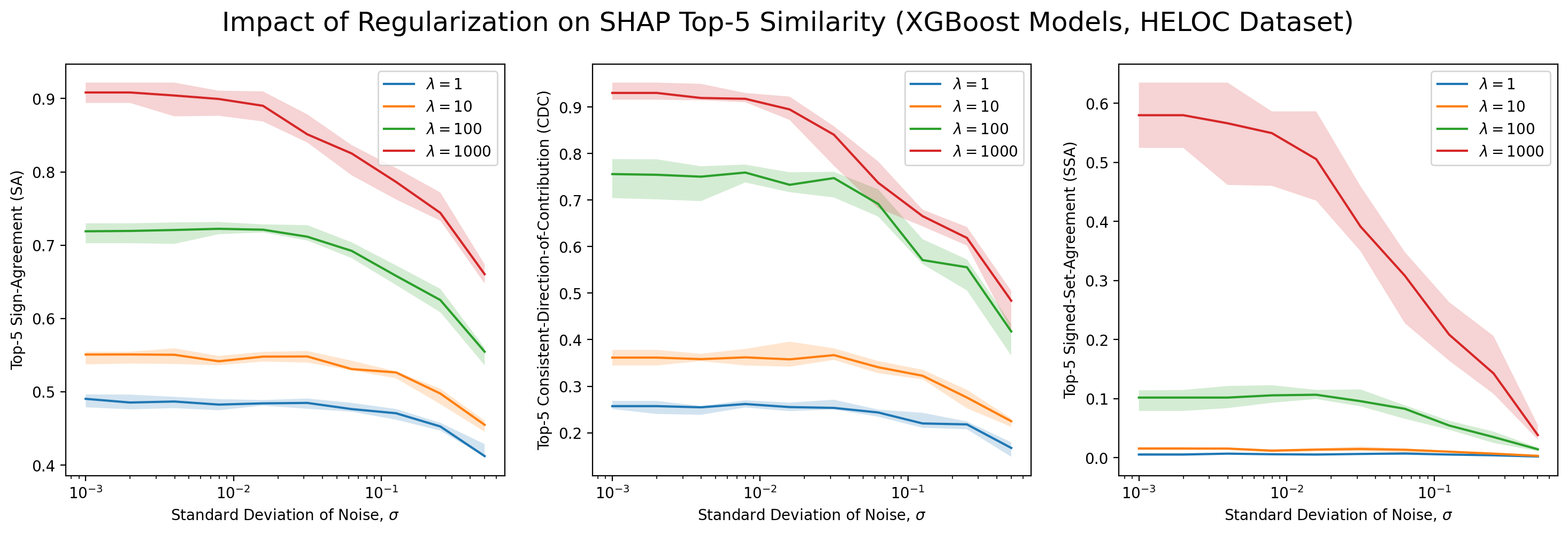}
    \caption{\small Various XGBoost $\lambda$ values for the HELOC dataset. Similarity of SHAP explanations between base models (original dataset) and retrained models (averaged across 10 random noise seeds for each value of $\sigma$). Left to right show Top-5 SA, CDC, and SSA metrics.}
    \label{fig:xgb_heloc_shap}
\end{figure}

\end{document}